\documentclass{article}

\usepackage[preprint]{neurips_2026}

\usepackage[utf8]{inputenc} %
\usepackage[T1]{fontenc}    %
\usepackage{hyperref}       %
\usepackage{url}            %
\usepackage{booktabs}       %
\usepackage{amsfonts}       %
\usepackage{nicefrac}       %
\usepackage{microtype}      %
\usepackage{amsmath}
\usepackage{multirow}
\usepackage{amssymb}
\usepackage{bm}
\newcommand{\ourmodel}{TraFL}
\usepackage{graphicx} 
\usepackage{amsmath}
\usepackage{amsthm}
\usepackage[utf8]{inputenc} %
\usepackage[T1]{fontenc}    %
\usepackage{hyperref}       %
\usepackage{url}            %
\usepackage{booktabs}       %
\usepackage{amsfonts}       %
\usepackage{nicefrac}       %
\usepackage{microtype}      %
\usepackage{adjustbox}
\usepackage[table,xcdraw]{xcolor}
\usepackage{comment}
\usepackage{arydshln}
\usepackage{tcolorbox}
\tcbuselibrary{skins,breakable}
\usepackage{xspace}
\usepackage{float}
\usepackage{fontawesome5}
\usepackage{makecell} 
\usepackage{todonotes}
\usepackage{tabularx} %
\usepackage{pifont}       %
\usepackage{enumitem}     %

\usepackage{listings}
\usepackage{upquote}

\definecolor{lightblue}{rgb}{0.678, 0.847, 0.902}  %

\newtcolorbox{findingbox}{
  colframe=black!20,         %
  colback=gray!5,            %
  boxrule=0.4pt,             %
  arc=1mm,                   %
  left=4pt, right=4pt, top=4pt, bottom=4pt, %
  breakable,
  enhanced,
  before skip=8pt,
  after skip=8pt,
}

\usepackage[capitalize,nameinlink,noabbrev]{cleveref}
\usepackage{float}
\usepackage[ruled,vlined]{algorithm2e}
\Crefname{figure}{Fig.}{Figs.}
\Crefname{table}{Tab.}{Tabs.}
\Crefname{section}{Sec.}{Secs.}
\Crefname{appendix}{App.}{Apps.}
\Crefname{equation}{Eq.}{Eqs.}

\definecolor{prompt1}{RGB}{223, 223, 192}
\definecolor{prompt1-frame}{RGB}{137, 137, 90}
\definecolor{prompt2}{RGB}{180, 230, 210}
\definecolor{prompt2-frame}{RGB}{70, 140, 120}
\definecolor{prompt3}{RGB}{212, 238, 179}
\definecolor{prompt3-frame}{RGB}{117, 146, 77}

\tcbset{
    prompt1/.style={
        enhanced,
        breakable,
        colback=prompt1,        %
        colframe=prompt1-frame,           %
        fontupper=\ttfamily,      %
        boxrule=1pt,              %
        arc=4mm,                  %
        left=1mm, right=1mm, top=1mm, bottom=1mm, %
        boxsep=4pt,               %
        before skip=10pt, after skip=10pt, %
        overlay={
            \node[anchor=north west, xshift=4pt, text=white] at (frame.north west) {\faDatabase};
        },
        title={~~~~~~~\textbf{Prompt for In-context Learning of Chain-of-thought Editing}},
        coltitle=white,
        fonttitle=\bfseries\small
    }
}

\tcbset{
    prompt2/.style={
        enhanced,
        breakable,
        colback=prompt2,        %
        colframe=prompt2-frame,            %
        fontupper=\ttfamily,               %
        boxrule=1pt,                       %
        arc=4mm,                           %
        left=1mm, right=1mm, top=1mm, bottom=1mm, %
        boxsep=4pt,                        %
        before skip=10pt, after skip=10pt, %
        overlay={
            \node[anchor=north west, xshift=4pt, text=white] at (frame.north west) {\faDatabase};
        },
        title={~~~~~~~\textbf{Chain-of-thought Reasoning Examples (VisMin dataset)}},
        coltitle=white,
        fonttitle=\bfseries\small
    }
}

\tcbset{
    prompt4/.style={
        enhanced,
        breakable,
        colback=prompt2,        %
        colframe=prompt2-frame,            %
        fontupper=\ttfamily,               %
        boxrule=1pt,                       %
        arc=4mm,                           %
        left=1mm, right=1mm, top=1mm, bottom=1mm, %
        boxsep=4pt,                        %
        before skip=10pt, after skip=10pt, %
        overlay={
            \node[anchor=north west, xshift=4pt, text=white] at (frame.north west) {\faDatabase};
        },
        title={~~~~~~~\textbf{Chain-of-thought Reasoning Examples}},
        coltitle=white,
        fonttitle=\bfseries\small
    }
}

\tcbset{
    prompt3/.style={
        enhanced,
        breakable,
        colback=prompt3,        %
        colframe=prompt3-frame,            %
        fontupper=\ttfamily,               %
        boxrule=1pt,                       %
        arc=4mm,                           %
        left=1mm, right=1mm, top=1mm, bottom=1mm, %
        boxsep=4pt,                        %
        before skip=10pt, after skip=10pt, %
        overlay={
            \node[anchor=north west, xshift=4pt, text=white] at (frame.north west) {\faDatabase};
        },
        title={~~~~~~~\textbf{Prompt for In-context Learning of Chain-of-thought Editing}},
        coltitle=white,
        fonttitle=\bfseries\small
    }
}

\definecolor{rolloutbg}{RGB}{245,247,250}
\definecolor{rolloutframe}{RGB}{70,95,130}
\definecolor{tealprompt}{RGB}{0,128,128}
\definecolor{purpleprompt}{RGB}{128,0,180}
\definecolor{reasoningcolor}{RGB}{180,90,0}
\definecolor{systemgray}{RGB}{90,90,90}
\definecolor{codebg}{RGB}{248,248,248}

\lstdefinestyle{mypython}{
  language=Python,
  basicstyle=\ttfamily\tiny,
  backgroundcolor=\color{codebg},
  frame=single,
  rulecolor=\color{black!20},
  showstringspaces=false,
  columns=fullflexible,
  keepspaces=true
}

\tcbset{
  code_prompt_rollout/.style={
    enhanced,
    breakable,
    colback=rolloutbg,
    colframe=rolloutframe,
    boxrule=0.8pt,
    arc=2mm,
    left=1mm, right=1mm, top=1mm, bottom=1mm,
    boxsep=4pt,
    before skip=10pt, after skip=10pt,
    colbacktitle=rolloutframe,
    coltitle=white,
    fonttitle=\bfseries\small,
    title={\textbf{Rollout Example: System Prompt and Sample Model Output}},
    attach boxed title to top left={xshift=0mm,yshift=0mm},
    boxed title style={
      sharp corners,
      size=small,
      boxrule=0pt,
      colframe=rolloutframe,
      colback=rolloutframe
    }
  }
}

\newtheorem{theorem}{Theorem}
\newtheorem{proposition}{Proposition}
\newtheorem{corollary}{Corollary}

\definecolor{theorem_bg}{RGB}{255,238,232}
\definecolor{theorem_frame}{RGB}{255,238,232}

\newtcolorbox{theorembox}{
  enhanced,
  breakable,
  colback=theorem_bg,
  colframe=theorem_frame,
  boxrule=0pt,
  arc=0pt,
  left=6pt,
  right=6pt,
  top=5pt,
  bottom=5pt,
  before skip=0.8em,
  after skip=0.8em
}

\definecolor{theorem_bg}{RGB}{255,238,232}

\newtcolorbox{repeatedtheorembox}{
  enhanced,
  breakable,
  colback=theorem_bg,
  colframe=theorem_bg,
  boxrule=0pt,
  arc=0pt,
  left=6pt,
  right=6pt,
  top=5pt,
  bottom=5pt,
  before skip=0.8em,
  after skip=0.8em
}

\usepackage{silence}
\title{
Beyond Mode-Seeking RL: Trajectory-Balance Post-Training for Diffusion Language Models}

\author{\bf Saba Ahmadi\thanks{denotes equal contribution}\hspace{1em} 
    Prasanna Parthasarathi\footnotemark[1]\hspace{1em}
    Yufei Cui \\
    Noah’s Ark Lab  \quad 
}

\begin{document}

\maketitle

\begin{abstract}
Diffusion language models are a promising alternative to autoregressive models, yet post-training methods for them largely adapt reward-maximizing objectives. We identify a central failure mode in this setting we call \emph{trajectory locking}: sampled reward-driven updates over-concentrate probability mass onto a narrow set of denoising paths, reducing coverage of alternative correct solutions under repeated sampling. To address this, we propose \ourmodel\ (\textbf{Tra}jectory \textbf{F}low ba\textbf{L}ancing), a trajectory-balance objective that trains the policy toward a reward-tilted target distribution anchored to a frozen reference model. We make this practical for diffusion language models with a diffusion-compatible sequence-level surrogate and a learned prompt-dependent normalization. Across mathematical reasoning and code generation benchmarks, \ourmodel\ is the only evaluated post-training method that improves over the base model in every benchmark--length setting, with gains that persist as the sampling budget increases. The improvements transfer to held-out evaluations: \ourmodel\ stays above the base model on Minerva Math~\citep{minerva} and is the strongest method on every LiveCodeBench~\citep{jain2025livecodebench} difficulty split. 
\end{abstract}
    
\section{Introduction}
\label{sec:intro}
Recent diffusion language models (dLLMs) have emerged as a compelling alternative to autoregressive models, showing early promise on reasoning and code generation~\citep{llada,ye2025dream}. A key open question for advancing these models further is how to post-train them effectively. From a capability perspective, current post-training methods improve single-sample accuracy but struggle to produce a \emph{diverse} set of correct solutions under repeated sampling---a key requirement when many valid answers exist.

Designing such an objective is challenging because diffusion language models do not expose token-level conditional log-probabilities in the way autoregressive models do. Existing approaches therefore adapt reward-maximizing RL to diffusion generation through surrogate likelihoods, autoregressive reductions, or stepwise approximations~\citep{zhao2025diffuGRPO,tang2025wd1,wang2025spg,kunde2026egspo,ni2026flexibility}. Across these methods, we observe a recurring failure mode we call \emph{trajectory locking}: because reward depends only on the final completion and is indifferent to which denoising path produced it, sampled policy-gradient updates reinforce already-favored paths, progressively collapsing probability mass onto a narrow subset of trajectories---and with it, coverage of alternative correct solutions. Distribution-matching methods that train toward a reward-tilted target~\citep{zhu2026flowrl,dmpo} offer a principled alternative, yet as we show, the way the normalization term is handled can reintroduce the same concentration failure. This raises a basic question: \emph{what is the right post-training objective for diffusion language models that avoids trajectory locking while remaining practical?}

To this end, we propose \ourmodel\ (\textbf{Tra}jectory \textbf{F}low ba\textbf{L}ancing), a post-training objective grounded in the trajectory-balance principle from Generative Flow Networks (GFlowNets)~\citep{bengio2026gflownetfoundations}. Rather than using reward as an unconstrained amplifier of sampled trajectories, \ourmodel\ trains the policy toward a reward-tilted target distribution anchored to a frozen reference model. In this target, the reward increases the mass assigned to successful completions, while the reference model regularizes how probability mass is allocated over plausible generations. We make this practical for diffusion language models with two ingredients: (i) a diffusion-compatible sequence-level surrogate for comparing fully denoised completions under the current and reference models, and (ii) a learned prompt-dependent normalization term, jointly trained with the policy, that receives gradient signal at every training step regardless of which completions are sampled.

Across mathematical reasoning (GSM8K~\citep{cobbe2021gsm8k}, MATH-500~\cite{lightman2023math500}) and code generation (HumanEval~\citep{chen2021humaneval}, MBPP~\citep{mbpp}) benchmarks, \ourmodel\ is the only directly evaluated post-training method that improves over the base model in every benchmark-length setting. The gains transfer to held-out evaluations on Minerva Math~\citep{minerva} and LiveCodeBench~\citep{jain2025livecodebench}, where \ourmodel\ is the strongest method on every LiveCodeBench difficulty split.

\noindent Our contributions are as follows:
\begin{enumerate}[leftmargin=2em, itemsep=2pt, parsep=2pt]
    \item We identify \emph{trajectory locking} as a central failure mode of dLLM post-training, where sampled reward-driven updates collapse probability mass onto a narrow set of denoising paths.
    \item We propose \ourmodel\ (\textbf{Tra}jectory \textbf{F}low ba\textbf{L}ancing), a trajectory-balance objective for post-training diffusion language models that trains toward a reference-anchored reward-tilted target distribution, made practical with a diffusion-compatible sequence-level surrogate and a learned prompt-dependent normalization.
    \item \ourmodel\ is the only evaluated post-training method that improves over the base model in every benchmark-length setting on GSM8K, MATH-500, HumanEval, and MBPP, outperforming \textsc{ESPO}~\citep{ou2026espo_principled} and \textsc{JustGRPO}~\citep{ni2026flexibility} on average.
    \item The gains transfer to held-out Minerva Math~\citep{minerva} and LiveCodeBench~\citep{jain2025livecodebench}, where \ourmodel\ is the strongest method on every LiveCodeBench difficulty split, and an LLM-as-judge analysis provides evidence that improvements are associated with broader correct-solution coverage rather than only sharper scoring of a single mode.
   
\end{enumerate}

\section{Related Work}
\label{sec:related_work}

\paragraph{RL post-training for diffusion language models.}
Recent diffusion language models such as LLaDA~\citep{llada} and Dream~\citep{ye2025dream} have made diffusion-based text generation a viable alternative to autoregressive language models, but reinforcement learning for these models remains challenging because diffusion generation does not expose the same left-to-right token-level conditional factorization used by PPO- or GRPO-style training in autoregressive LLMs. Early work adapted autoregressive RL objectives to diffusion models by introducing likelihood surrogates. In particular, \textsc{d1}~\citep{zhao2025diffuGRPO} proposes \textsc{diffu-GRPO}, a GRPO-style method for masked dLLMs built on one-step per-token log-probability estimation together with a mean-field approximation to sequence likelihood, and combines it with a preceding supervised fine-tuning stage in its full recipe. \textsc{wd1}~\citep{tang2025wd1} removes explicit policy-ratio estimation and instead optimizes a ratio-free weighted log-likelihood objective derived from reverse-KL-regularized policy optimization. \textsc{SPG}~\citep{wang2025spg} addresses the bias induced by one-sided likelihood surrogates by maximizing a lower bound for positive-advantage samples and minimizing an evidence upper bound for negative-advantage samples, together with a block-wise masking strategy for more stable Monte Carlo estimation.

A complementary line of work argues that the central issue is not only the quality of the surrogate, but also the action granularity used by the RL objective. \textsc{ESPO}~\citep{ou2026espo_principled} formalizes this view most explicitly: it treats whole-sequence generation as a single action and uses the ELBO as a tractable sequence-level proxy, together with stabilized ratio and KL estimators, arguing that token-level objectives are fundamentally mismatched to non-autoregressive diffusion generation. \textsc{TraceRL}~\citep{wang2025tracerl} takes yet another perspective, emphasizing alignment between the training objective and the model's preferred inference trajectory. It performs trajectory-aware optimization over denoising traces, and introduces a diffusion-based value model for variance reduction. In contrast, \textsc{JustGRPO}~\citep{ni2026flexibility} argues that preserving arbitrary-order generation during RL can itself be counterproductive for reasoning, and instead constrains training to autoregressive order so that standard GRPO can be applied directly, while still retaining the parallel decoding benefits of dLLMs at inference time.

\paragraph{Distribution matching and reward-tilted target distributions.}
Closest in spirit to our work are methods that move beyond pure reward maximization and instead optimize toward reward-tilted target distributions. On the dLLM side, \textsc{DMPO}~\citep{dmpo} formulates post-training as policy distribution matching toward a reward-tilted target distribution and implements this through importance sampling and weighted denoising cross-entropy, i.e., a scalable forward-KL-style approximation to distribution matching. On the autoregressive side, \textsc{FlowRL}~\citep{zhu2026flowrl} likewise advocates matching a reward-tilted distribution rather than maximizing reward alone, but does so through a GFlowNet-style formulation for autoregressive reasoning models rather than diffusion language models. Our method is aligned with this broader distribution-matching view, but differs in how it is instantiated for dLLMs: we adopt a trajectory-balance perspective tailored to diffusion generation, combine it with a diffusion-compatible sequence-level surrogate, and learn a prompt-dependent partition function that captures the normalization of the reward-tilted target.

\section{Trajectory Flow Balancing}
\label{sec:method}

We introduce \ourmodel\ (\textbf{Tra}jectory \textbf{F}low ba\textbf{L}ancing), a reference-anchored trajectory-balance objective for post-training diffusion language models. We first formalize why terminal rewards can induce trajectory locking, then define the reward-tilted target underlying \ourmodel.

\paragraph{Setup: Diffusion Trajectories and Terminal Rewards}
\label{sec:method_setup}

Given a prompt \(\mathbf{x}\), a discrete diffusion language model defines a distribution over denoising trajectories
\[
\tau^{(i)} = \left(z_T^{(i)}, z_{T-1}^{(i)}, \ldots, z_0^{(i)}\right),
\qquad
\tau^{(i)} \sim p_\theta(\tau \mid \mathbf{x}).
\]
Here, \(z_T\) denotes the fully noised state and \(z_0\) the final completion \(\mathbf{y}\). 
The reward \(r(\mathbf{x},\mathbf{y})\) is assigned only to this terminal completion, while the model distribution is induced through the full denoising trajectory. 
This distinction is important because several denoising trajectories can terminate in the same completion \(\mathbf{y}\), while different completions can correspond to different valid solution modes. Thus, post-training can affect both the allocation of probability over paths to a fixed answer and the coverage of distinct terminal solution modes.

\subsection{Trajectory Locking: Why Terminal Reward Is Not Enough}
\label{sec:trajectory_locking}

Maximizing terminal reward alone is blind to how probability mass is allocated across denoising paths that reach the same completion. 
The objective depends on the total probability assigned to a final completion \(\mathbf{y}\), but not on how that probability is split among the denoising trajectories that produce \(\mathbf{y}\). 
We formalize this path-indifference below.
\begin{theorembox}
\begin{proposition}[Path-indifference of terminal reward]
\label{prop:path_indifference}
Fix a prompt \(\mathbf{x}\) and a final completion \(\mathbf{y}\). 
Let \(\mathcal{T}(\mathbf{y})\) denote the set of denoising trajectories that terminate at \(\mathbf{y}\). 
Any redistribution of probability mass among trajectories in \(\mathcal{T}(\mathbf{y})\) that preserves \(p_\theta(\mathbf{y}\mid\mathbf{x})\) leaves a terminal-reward objective unchanged.
\end{proposition}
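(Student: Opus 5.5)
The plan is to write a terminal-reward objective as an expectation over trajectories and then marginalize onto the terminal completion. Let \(J(\theta)=\mathbb{E}_{\tau\sim p_\theta(\tau\mid\mathbf{x})}\bigl[r(\mathbf{x},z_0(\tau))\bigr]\), where \(z_0(\tau)\) is the final state of \(\tau\). More generally, we can take any objective of the form \(\mathbb{E}_\tau[f(\mathbf{x},z_0(\tau))]\), or any functional of the induced terminal marginal, such as a reward term plus terms depending only on \(p_\theta(\cdot\mid\mathbf{x})\). The first step is to make precise what ``terminal-reward objective'' means. I will define it as any objective that depends on the trajectory distribution only through the terminal reward \(r(\mathbf{x},z_0(\tau))\) under expectation. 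This is the natural reading given the setup in \Cref{sec:method_setup}.

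The key step is the partition of trajectory space. The sets \(\mathcal{T}(\mathbf{y}')\), indexed by completions \(\mathbf{y}'\), are disjoint and cover all trajectories, since each trajectory has a unique terminal state. Hence
\[
J(\theta)=\sum_{\mathbf{y}'}\sum_{\tau\in\mathcal{T}(\mathbf{y}')}p_\theta(\tau\mid\mathbf{x})\,r(\mathbf{x},\mathbf{y}')=\sum_{\mathbf{y}'}r(\mathbf{x},\mathbf{y}')\,p_\theta(\mathbf{y}'\mid\mathbf{x}),
\]
where the marginal is \(p_\theta(\mathbf{y}'\mid\mathbf{x})=\sum_{\tau\in\mathcal{T}(\mathbf{y}')}p_\theta(\tau\mid\mathbf{x})\). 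The state and trajectory spaces are finite, so the sums are finite and can be exchanged freely. If countability were an issue, nonnegativity or boundedness of \(r\) would justify the exchange.

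Now consider a redistribution, i.e.\ a new trajectory law \(\tilde p(\tau\mid\mathbf{x})\). It agrees with \(p_\theta\) outside \(\mathcal{T}(\mathbf{y})\), and it satisfies \(\sum_{\tau\in\mathcal{T}(\mathbf{y})}\tilde p=\sum_{\tau\in\mathcal{T}(\mathbf{y})}p_\theta=p_\theta(\mathbf{y}\mid\mathbf{x})\). It follows that \(\tilde p(\mathbf{y}'\mid\mathbf{x})=p_\theta(\mathbf{y}'\mid\mathbf{x})\) for every \(\mathbf{y}'\): for \(\mathbf{y}'\neq\mathbf{y}\) this holds because the trajectories are untouched, and for \(\mathbf{y}\) it holds by the preservation hypothesis. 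Substituting into the marginal formula gives equality of the objective. The same argument shows that the policy gradient's expectation is blind to the within-\(\mathcal{T}(\mathbf{y})\) allocation.

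There is no genuine obstacle here. The only delicate point is definitional: whether the redistribution is realizable by some \(\theta\), and whether the objective includes trajectory-level regularizers such as a trajectory KL to a reference. I would state the result for the objective as a functional of the trajectory law, so that realizability is irrelevant. I would also note explicitly that terms depending on the full path distribution, like a trajectory-level KL, fall outside the hypothesis. That is exactly the gap the reference-anchored target is meant to fill.
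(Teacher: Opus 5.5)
Your proof is correct and follows the paper's route. You marginalize the trajectory-level expectation onto terminal completions, $J=\sum_{\mathbf{y}'} r(\mathbf{x},\mathbf{y}')\,p_\theta(\mathbf{y}'\mid\mathbf{x})$, and note that a redistribution confined to $\mathcal{T}(\mathbf{y})$ that preserves its total leaves every terminal marginal, and hence $J$, unchanged. You are more explicit than the paper about the partition of trajectory space and about the new law agreeing with $p_\theta$ off $\mathcal{T}(\mathbf{y})$.

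One caution on your closing aside. The argument shows that $J$ is flat along within-$\mathcal{T}(\mathbf{y})$ redistributions, since its derivative with respect to $p(\tau\mid\mathbf{x})$ is $r(\mathbf{x},\mathbf{y})$ for every $\tau\in\mathcal{T}(\mathbf{y})$. It does not show that $\nabla_\theta J$ is independent of the within-$\mathcal{T}(\mathbf{y})$ allocation. That allocation still enters through $\nabla_\theta p_\theta(\tau\mid\mathbf{x})$, and this dependence is exactly what drives the trajectory-locking dynamics the paper describes.
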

\end{theorembox}

\begin{proof}
The terminal-reward objective decomposes as
\[
J(\mathbf{x})
=
\sum_{\mathbf{y}} p_\theta(\mathbf{y}\mid \mathbf{x})\, r(\mathbf{x},\mathbf{y}),
\qquad
p_\theta(\mathbf{y}\mid \mathbf{x})
=
\sum_{\tau\in\mathcal{T}(\mathbf{y})}
p_\theta(\tau\mid \mathbf{x}).
\]
The contribution of \(\mathbf{y}\) to \(J\) depends only on the scalar \(p_\theta(\mathbf{y}\mid\mathbf{x})\), not on how that mass is spread across trajectories in \(\mathcal{T}(\mathbf{y})\). 
Therefore, any redistribution of mass within \(\mathcal{T}(\mathbf{y})\) that keeps \(p_\theta(\mathbf{y}\mid\mathbf{x})\) unchanged leaves \(J(\mathbf{x})\) unchanged.
\end{proof}

Proposition~\ref{prop:path_indifference} is an objective-level statement: reward maximization does not distinguish among paths that reach the same terminal completion. 
In sampled optimization, however, this flatness can become unstable. 
Only sampled trajectories receive direct gradient signal. 
Consequently, if two trajectories reach the same rewarding completion but one is sampled slightly more often early in training, it receives more positive updates, becomes more likely, and is sampled even more often in subsequent updates. 
We call this self-reinforcing concentration \emph{trajectory locking}. 
A formal analysis of this feedback effect is given in App.~\ref{app:trajectory_locking} and App.~\ref{app:replicator}.

Trajectory locking matters because trajectory diversity upper-bounds terminal solution coverage. 
We make this connection explicit next.

\begin{theorembox}
\begin{theorem}[Trajectory diversity is necessary for mode coverage]
\label{thm:trajectory_diversity}
Let \(\mathcal{T}\) be a random denoising trajectory taking values in a countable set \(\Omega_{\mathcal{T}}\), and let \(M=g(\mathcal{T})\) be its terminal solution mode under a deterministic mapping \(g: \Omega_{\mathcal{T}} \to \Omega_M\). Then
\[
|\mathrm{supp}(M)| \le |\mathrm{supp}(\mathcal{T})|
\qquad\text{and}\qquad
H(M)\le H(\mathcal{T}),
\]
where \(\mathrm{supp}(\cdot)\) denotes the support of a distribution and \(H(\cdot)\) the Shannon entropy. In particular, terminal mode coverage cannot exceed trajectory diversity.
\end{theorem}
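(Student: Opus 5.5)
The plan is to prove both inequalities directly from the pushforward structure $p_M(m)=\sum_{\tau\in g^{-1}(m)} p_{\mathcal{T}}(\tau)$, in the same spirit as the decomposition in Proposition~\ref{prop:path_indifference}.

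\textbf{Support bound.} I will show that $g$ maps $\mathrm{supp}(\mathcal{T})$ onto $\mathrm{supp}(M)$. If $m\in\mathrm{supp}(M)$, then $p_M(m)>0$. Since $p_M(m)$ is a sum of nonnegative terms, at least one $\tau\in g^{-1}(m)$ has $p_{\mathcal{T}}(\tau)>0$, so $m=g(\tau)$ with $\tau\in\mathrm{supp}(\mathcal{T})$. A surjection from $\mathrm{supp}(\mathcal{T})$ onto $\mathrm{supp}(M)$ gives $|\mathrm{supp}(M)|\le|\mathrm{supp}(\mathcal{T})|$. Countability keeps this valid for cardinals; in the infinite case the inequality is trivial or follows from the axiom of choice.

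\textbf{Entropy bound.} I will use the chain rule for the joint entropy $H(\mathcal{T},M)$ in two ways:
\[
H(\mathcal{T},M)=H(\mathcal{T})+H(M\mid\mathcal{T})=H(M)+H(\mathcal{T}\mid M).
\]
Because $M$ is a deterministic function of $\mathcal{T}$, we have $H(M\mid\mathcal{T})=0$. Since $H(\mathcal{T}\mid M)\ge 0$, this gives $H(M)\le H(\mathcal{T})$.

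To avoid relying on finiteness, I would also give an elementary termwise argument. For each $\tau\in g^{-1}(m)$ we have $p_{\mathcal{T}}(\tau)\le p_M(m)$, so $-\log p_{\mathcal{T}}(\tau)\ge -\log p_M(m)$. Multiplying by $p_{\mathcal{T}}(\tau)$ and summing over $\tau\in g^{-1}(m)$, then over $m$, yields $H(\mathcal{T})\ge H(M)$. All terms are nonnegative, so rearranging sums is justified by Tonelli, even when $H(\mathcal{T})=\infty$.

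\textbf{Main obstacle.} The main subtlety is the countable, possibly infinite-entropy setting. The chain-rule manipulation $\infty-\infty$ is not meaningful there. That is why I prefer the termwise nonnegative-series argument as the primary proof. The ``in particular'' sentence then follows by reading support size and entropy as the two measures of diversity.
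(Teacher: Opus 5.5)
Your proposal is correct and follows the paper's proof. The support bound goes via $\mathrm{supp}(M)=g(\mathrm{supp}(\mathcal{T}))$, and the entropy bound via $H(M\mid\mathcal{T})=0$ together with the two chain-rule expansions of $H(\mathcal{T},M)$.

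Your termwise argument is that same chain rule written out, since $-\log p_{\mathcal{T}}(\tau)=-\log p_M(m)-\log p_{\mathcal{T}\mid M}(\tau\mid m)$ and the last term is nonnegative. The $\infty-\infty$ concern also does not arise in the paper's version, because it only adds nonnegative extended-real quantities and never subtracts.
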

\end{theorembox}

\begin{theorembox}
\begin{corollary}[Trajectory locking bounds mode coverage]
\label{cor:locking}
If trajectory locking reduces the trajectory distribution to a subset \(\mathcal{S}\) of all possible paths, then the set of terminal modes that can be covered is at most \(g(\mathcal{S})\). In particular, collapse to a single trajectory implies collapse to a single terminal mode.
\end{corollary}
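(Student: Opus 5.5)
The plan is to read the corollary as a direct consequence of the support half of Theorem~\ref{thm:trajectory_diversity}, applied to the post-locking trajectory distribution. First I will formalize ``trajectory locking reduces the trajectory distribution to a subset \(\mathcal{S}\)'' as the condition \(\mathrm{supp}(\mathcal{T})\subseteq\mathcal{S}\), meaning \(\Pr(\mathcal{T}\in\mathcal{S})=1\) under the trained policy \(p_\theta(\tau\mid\mathbf{x})\). I will also read ``terminal modes that can be covered'' as \(\mathrm{supp}(M)\), where \(M=g(\mathcal{T})\) is the mode map from the theorem.

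The key step is the set-level identity behind the theorem: since \(\Omega_{\mathcal{T}}\) is countable and \(g\) is deterministic,
\[
\Pr(M=m)=\sum_{\tau\in g^{-1}(m)}\Pr(\mathcal{T}=\tau).
\]
Hence \(\Pr(M=m)>0\) if and only if some \(\tau\) with \(g(\tau)=m\) has positive probability. This gives \(\mathrm{supp}(M)=g(\mathrm{supp}(\mathcal{T}))\), and therefore \(\mathrm{supp}(M)\subseteq g(\mathcal{S})\) by monotonicity of images. This proves the first claim. Taking cardinalities gives \(|\mathrm{supp}(M)|\le|g(\mathcal{S})|\le|\mathcal{S}|\), which matches the inequality \(|\mathrm{supp}(M)|\le|\mathrm{supp}(\mathcal{T})|\) in Theorem~\ref{thm:trajectory_diversity}.

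For the special case, I take \(\mathcal{S}=\{\tau^\star\}\). Then \(\mathcal{T}=\tau^\star\) almost surely, so \(\mathrm{supp}(M)=\{g(\tau^\star)\}\), which is a single mode. The entropy bound in the theorem gives the quantitative version of the same conclusion: \(H(M)\le H(\mathcal{T})=0\).

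There is no real technical obstacle here. The only delicate point is interpretive: I must fix the reading of ``reduces to a subset'' as a support-containment condition, and the reading of ``can be covered'' as positive-probability modes. I also need to note that this bound is one-directional. Many trajectories in \(\mathcal{S}\) may map to the same mode, so \(g(\mathcal{S})\) is an upper bound on coverage and not an equality.
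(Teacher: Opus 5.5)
Your proposal is correct and follows the paper's route. The paper proves the corollary as immediate from Theorem~\ref{thm:trajectory_diversity}, whose proof establishes \(\mathrm{supp}(M)=g(\mathrm{supp}(\mathcal{T}))\); you derive that same identity explicitly and then apply monotonicity of images to get \(\mathrm{supp}(M)\subseteq g(\mathcal{S})\), which is the argument the paper leaves implicit.
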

\end{theorembox}
Proofs of Theorem~\ref{thm:trajectory_diversity} and Corollary~\ref{cor:locking} are given in App.~\ref{app:theory_mainproofs}. 
Together, these statements make the stakes concrete: trajectory locking does not merely change which paths are used; it can limit which solutions the model can produce under sampling. 
This motivates a reference-anchored objective that uses reward to tilt terminal completions, while preserving a non-degenerate allocation of probability mass over trajectories.

\subsection{Reward-Tilted Trajectory Flow}
\label{sec:reward_tilted_flow}

To avoid relying on terminal reward alone to determine trajectory allocation, we define a reward-tilted target distribution over denoising trajectories by reweighting a frozen reference model:
\begin{equation}
\label{eq:target_dist}
p^*(\tau \mid \mathbf{x})
=
\frac{1}{Z(\mathbf{x})}
p_{\mathrm{ref}}(\tau \mid \mathbf{x})
\exp\bigl(\beta r(\mathbf{x}, \mathbf{y})\bigr),
\end{equation}
where \(r(\mathbf{x}, \mathbf{y})\) is a scalar reward on the final completion, \(\beta > 0\) controls reward sharpness, and \(Z(\mathbf{x})\) is the prompt-dependent normalization term. 
Although the reward is still terminal, it does not define the target distribution by itself: it tilts a frozen reference trajectory distribution. Thus, among trajectories with the same terminal reward, the target remains proportional to \(p_{\mathrm{ref}}(\tau\mid\mathbf{x})\), rather than being indifferent to how mass is allocated across them. This separates two roles that are coupled in direct reward maximization: the reward controls how much mass high-reward completions should receive, while the reference model anchors the relative allocation of probability over plausible trajectories and completions, rather than letting reward amplify only the sampled support.

\paragraph{Trajectory Flow Balancing Objective}
\label{sec:trafl_objective}

We train the current model to match the reward-tilted target in \cref{eq:target_dist} through a trajectory-balance residual. 
Equating \(p_\theta(\tau\mid\mathbf{x})\) with \(p^*(\tau\mid\mathbf{x})\) and taking logarithms gives
\begin{equation}
\label{eq:tb_residual}
\delta(\tau, \mathbf{x})
=
\log p_\theta(\tau \mid \mathbf{x})
-
\log p_{\mathrm{ref}}(\tau \mid \mathbf{x})
-
\beta r(\mathbf{x}, \mathbf{y})
+
\log Z(\mathbf{x}),
\end{equation}
and we minimize the squared loss,
\begin{equation}
\label{eq:tb_loss}
\mathcal{L}_{\mathrm{TraFL}}
=
\mathbb{E}_{\tau \sim p_\theta(\cdot \mid \mathbf{x})}
\bigl[\delta(\tau, \mathbf{x})^2\bigr].
\end{equation}
At the minimum, the residual is zero when the current trajectory distribution matches the reward-tilted reference distribution. 
The learned normalization term \(Z(\mathbf{x})\) absorbs the prompt-dependent offset in the balance condition, so the policy is trained on relative trajectory-level preferences rather than on an unnormalized reward signal alone. 
Equations~\eqref{eq:target_dist}--\eqref{eq:tb_loss} define the trajectory-level balance condition.

\paragraph{Normalized Log-Probability Surrogate}
\label{sec:seq_surrogate}

Evaluating \cref{eq:tb_residual} requires log-probability terms under both the current and reference models. Diffusion language models do not expose exact trajectory log-probabilities, so we use a tractable log-probability surrogate derived from a masked-reconstruction lower bound. Given a completion \(\mathbf{y}=(y_1,\dots,y_L)\), we sample \(l \sim \mathrm{Uniform}(\{1,\dots,L\})\), construct a corrupted sequence \(\mathbf{y}_l \sim q_l(\cdot \mid \mathbf{y}, \mathbf{x})\) by replacing exactly \(l\) uniformly sampled completion tokens with \(\mathtt{[mask]}\), and define
\begin{equation}
\label{eq:seq_surrogate}
{\log \hat p}_\theta(\mathbf{y}\mid\mathbf{x})
=
\underset{l \sim \mathcal{U}(\{1,\dots,L\})}{\mathbb{E}}
\;
\underset{{\mathbf{y}_l \sim q_l(\cdot \mid \mathbf{y},\mathbf{x})}}{\mathbb{E}}
\left[
\frac{1}{l}
\sum_{i=1}^{L}
\mathbf{1}[y_i^l = \mathtt{[mask]}]
\log p_\theta(y_i \mid \mathbf{y}_l,\mathbf{x})
\right].
\end{equation}
The indicator picks out the \(l\) masked positions, so the bracketed quantity is the average log-probability across those \(l\) positions; the outer expectations are over the sampled mask level \(l\) and corruption \(\mathbf{y}_l\). The full derivation from a masked-reconstruction lower bound, and the role of the \(1/l\) factor, are given in App.~\ref{app:surrogate_elbo}. We estimate \cref{eq:seq_surrogate} with Monte Carlo samples and use the same masking pattern for the current and reference models to reduce variance.

\paragraph{Why Distribution-Matching Alone Is Insufficient: The Role of a Learned \texorpdfstring{$Z$}{Z}.}
\textsc{DMPO}~\citep{dmpo} also targets a reward-tilted distribution but does not learn \(Z(\mathbf{x})\); it instead estimates the partition function with a softmax over the current rollout buffer. This makes the normalization rollout-local: once sampled completions begin to concentrate, both the estimated target weights and the resulting update are computed over the same narrowed support, and completions outside that support receive no gradient signal. As we discuss in App.~\ref{app:dmpo_replicator}, this leads to a self-reinforcing concentration over sampled completions. 

\section{Experimental Setup}
\label{sec:exp_setup}

\subsection{Training Details}
\label{sec:training_details}

\paragraph{Implementation.}
Our base model is LLaDA-8B-Instruct~\citep{zhu2025lladaInstruct}, a masked diffusion language model that generates text by iterative denoising over discrete token sequences. For RL post-training, we apply \ourmodel\ directly to the instruction-tuned checkpoint, without an additional SFT stage. We use LoRA~\citep{hu2022lora} adapters with rank \(r=128\) and scaling factor \(\alpha=128\) for parameter-efficient fine-tuning. The partition-function head \(\log Z(\mathbf{x})\) is parameterized as a 2-layer MLP over mean-pooled prompt hidden states, with backbone features detached so that optimizing the partition head does not alter the backbone's prompt representations. We train with a peak learning rate of \(6\times10^{-5}\) under a cosine decay schedule with minimum learning rate \(6\times10^{-6}\), and generate 5 rollouts per prompt. The masked surrogate log-probability \({\log \hat p}_\theta(\mathbf{y}\mid\mathbf{x})\) is estimated using 4 antithetic replicates, i.e., complementary mask pairs, with masking applied only to non-\textsc{eos} positions.

\paragraph{Training Datasets.}
We train \ourmodel\ on two task families, mathematical reasoning and code generation, with separate task-specific checkpoints for each family. For math, we train on GSM8K~\citep{cobbe2021gsm8k} (7,473 problems) and MATH~\citep{lightman2023math500} (7,500 problems) as two independent training runs. For code, we train on a single mixture of filtered subsets from AceCode-89K~\citep{AceCoder} (11,937 examples) and KodCode-Light-RL-10K~\citep{xu2025kodcode} (2,695 examples). Filtering details are given in App.~\ref{app:dataset_filtering}, and a comparison of training data used by \ourmodel\ and the baselines we evaluate against is provided in ~\cref{tab:training_data_comparison}. Across both domains, rewards are binary: exact answer matching for math and execution-based test passing for code.

\subsection{Evaluation Setup}
\label{sec:eval_setup}

We evaluate on both mathematical reasoning and code generation to test whether improvements transfer across symbolic reasoning and program synthesis. We use the same prompting format for both training and evaluation; representative examples are shown in~App.~\ref{app:prompts}.

\paragraph{Metrics.}
At evaluation time, we generate outputs with maximum lengths of 256 and 512 tokens and a block size of 32. We use the fastdLLM~\citep{wu2025fastdllmtrainingfreeaccelerationdiffusion} parallel decoding sampler with a low-confidence masking strategy: at each denoising step, tokens are selected for unmasking according to model confidence, and decoding proceeds under stochastic nucleus sampling with temperature $T$. We report results under top-$p$=0.91 sampling and compute \textbf{Pass@$k$} from $n$ independently sampled generations per problem using the unbiased estimator of HumanEval~\citep{chen2021humaneval}:
\[
  \mathbf{\bm{Pass}}@k \;=\; 1 - \prod_{i=n-c+1}^{n}\!\left(1 - \frac{k}{i}\right),
\]
where $c \le n$ is the number of correct samples among the $n$ generations. This estimator has lower variance than evaluating exactly $k$ samples directly and approaches the true solve rate as $n$ increases. For math, correctness is determined by exact-match accuracy of the extracted boxed answer; for code, by execution against the provided test suite.

\paragraph{Benchmarks.}
For mathematical reasoning, we evaluate on GSM8K~\citep{cobbe2021gsm8k} (1,319 problems), MATH-500~\citep{lightman2023math500} (500 problems), and the algebra split of Minerva Math~\citep{minerva} (1,187 problems). For code generation, we evaluate on HumanEval~\citep{chen2021humaneval} (164 problems), MBPP~\citep{mbpp} (500 problems), and LiveCodeBench~\citep{jain2025livecodebench} (release\_v5), which contains 880 problems spanning easy, medium, and hard difficulty levels. Together, these benchmarks cover natural-language mathematical reasoning and executable code synthesis, with Minerva Math and LiveCodeBench serving as held-out evaluations beyond the datasets used for post-training.

\paragraph{Baselines.}
Our primary direct baselines are \textsc{JustGRPO}~\citep{ni2026flexibility} and \textsc{ESPO}~\citep{ou2026espo_principled}, two recent post-training methods for diffusion language models with usable public checkpoints. This allows us to evaluate all direct baselines under the same decoding protocol and evaluation pipeline. Other closely related methods, including \textsc{D1}~\citep{zhao2025diffuGRPO} and \textsc{DMPO}~\citep{dmpo}, had not released public checkpoints at the time of our experiments. We therefore include their published numbers in App.~\ref{app:baseline_details} for context. For math, all evaluated methods are trained per-task on GSM8K and MATH separately and reported as matched per-task checkpoints. For code, \ourmodel\ trains on a filtered AceCode-89K and KodCode mixture (14,632 examples total); the public \textsc{JustGRPO} and \textsc{ESPO} coding checkpoints we evaluate were trained on substantially larger coding mixtures than \ourmodel, so coding comparisons are conservative with respect to \ourmodel. Tab.~\ref{tab:training_data_comparison} (App.~\ref{app:dataset_filtering}) summarizes the training data for each method.

\section{Results}
\label{sec:results}
We evaluate \ourmodel\ across four axes: robustness to sampling budget and temperature, in-distribution math and code performance, held-out transfer, and correct-solution diversity, where an LLM-as-a-judge protocol directly tests whether gains reflect broader coverage of distinct solution strategies.

\subsection{Robustness to Sampling Budget and Temperature}
\label{sec:passk_temp}
\ourmodel\ improves over the base diffusion model across all sampling budgets and decoding temperatures we test, including the single-sample setting. We evaluate Pass@\(k\) for \(k\in\{1,2,4,8,16\}\) and \(T\in\{0.3,0.6,0.9\}\), separating two effects that a single Pass@\(k\) number can obscure: whether post-training sacrifices Pass@1 for multi-sample gains, and how performance scales as the sampling budget increases. \Cref{fig:passk_temp} reports average Pass@\(k\) across GSM8K, MATH-500, HumanEval, and MBPP.

For \ourmodel, Pass@\(k\) increases steadily with the sampling budget at all three temperatures (\cref{fig:passk_temp}(a)). Higher-temperature decoding is especially beneficial as \(k\) grows: \(T=0.9\) gives the strongest performance at larger sampling budgets, followed by \(T=0.6\) and \(T=0.3\). This indicates that \ourmodel\ benefits from more exploratory sampling rather than saturating after only a few samples.

The gap to the base model is positive at every \(k\) and \(T\) (\cref{fig:passk_temp}(b)), including \(k=1\). Thus, the multi-sample gains do not come from sacrificing single-sample accuracy. The largest margins occur at \(T=0.9\), where the advantage grows with the sampling budget. The base model itself, however, is strongest at \(T=0.6\). We therefore use \(T=0.6\) for the head-to-head comparison against post-training methods: this gives the base model its best decoding regime and makes the comparison conservative. At this fixed temperature (\cref{fig:passk_temp}(c)), \ourmodel\ leads ESPO, JustGRPO, and the base model from Pass@1 through Pass@16. The shaded regions show variation across evaluation settings, but \ourmodel\ remains the strongest method as the sampling budget increases.
\begin{figure*}[!t]
    \centering
    \includegraphics[width=\textwidth]{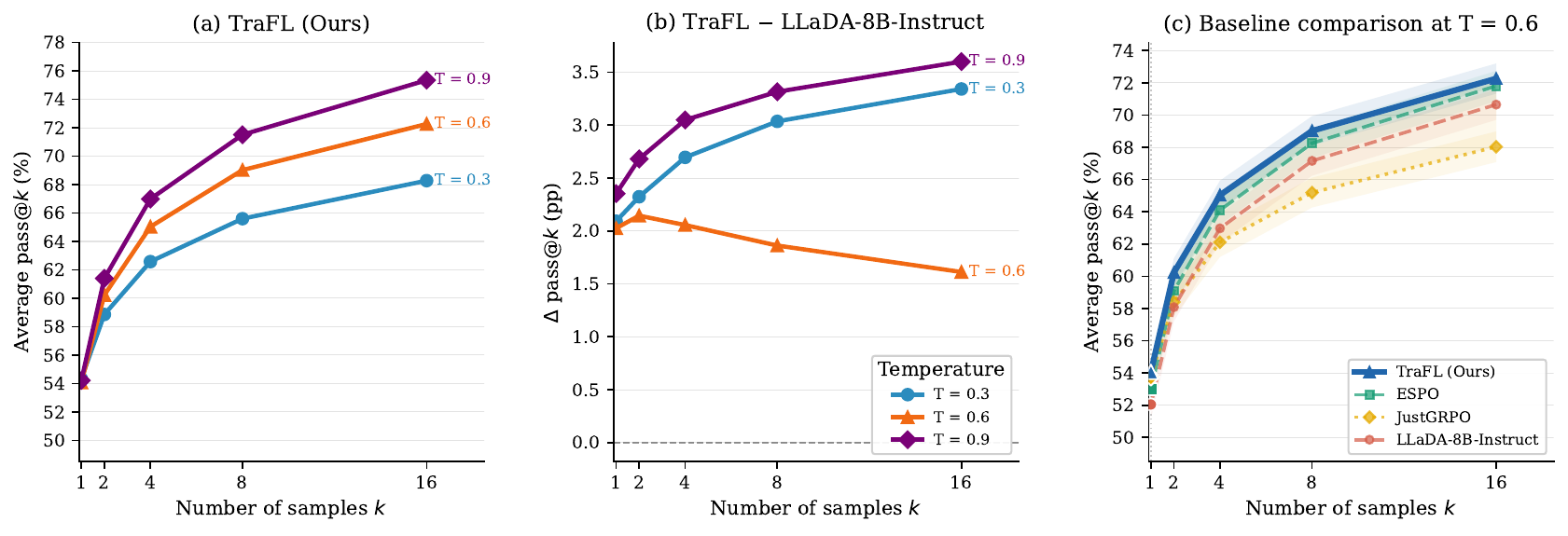}
    \caption{
    \textbf{\ourmodel\ improves over the base model and strong prior post-training methods across sampling budgets and temperatures.}
    (a) Average Pass@\(k\) of \ourmodel\ on GSM8K, MATH-500, HumanEval, and MBPP for \(T\in\{0.3, 0.6, 0.9\}\).
    (b) Pass@\(k\) gap to LLaDA-8B-Instruct under matched decoding. 
    (c) Baseline comparison at \(T=0.6\). \ourmodel\ leads ESPO, JustGRPO, and the base model from Pass@1 through Pass@16. 
    All results use \(n=16\) samples and are averaged over generation lengths 256 and 512.
    }
    \label{fig:passk_temp}
\end{figure*}

\subsection{Pass@5 Results}
\label{sec:main_passk_results}

\ourmodel\ is the only post-training method that improves over the base model in every benchmark-length setting we evaluate. \cref{fig:main_pass5} reports Pass@5 with \(n=16\) samples at \(T=0.6\) for maximum completion lengths 256 and 512, alongside average denoising steps. On math, the largest gain is on MATH-500 at length 256, where \ourmodel\ improves Pass@5 from 50.2 to 54.6, with smaller but positive gains on GSM8K at both lengths. On code, \ourmodel\ improves HumanEval from 53.2 to 54.8 at length 256 and from 53.7 to 55.7 at length 512, and improves MBPP from 59.1 to 62.1 and 59.2 to 62.0 at the two lengths. Averaged over all eight settings, this is a \(+2.0\) point improvement.

The baselines show uneven behavior across benchmarks. ESPO is the strongest prior method overall and improves several settings, especially MATH-500 at length 512, but it drops below the base model on GSM8K-512 and HumanEval-256. JustGRPO improves MBPP substantially but loses performance on GSM8K, HumanEval, and MATH-500 at length 512. \ourmodel\ is the only method with positive improvements in all eight settings, so its average gain reflects broad, consistent improvements rather than being concentrated on a single benchmark.
\begin{figure*}[ht]
    \centering
    \includegraphics[width=\textwidth]{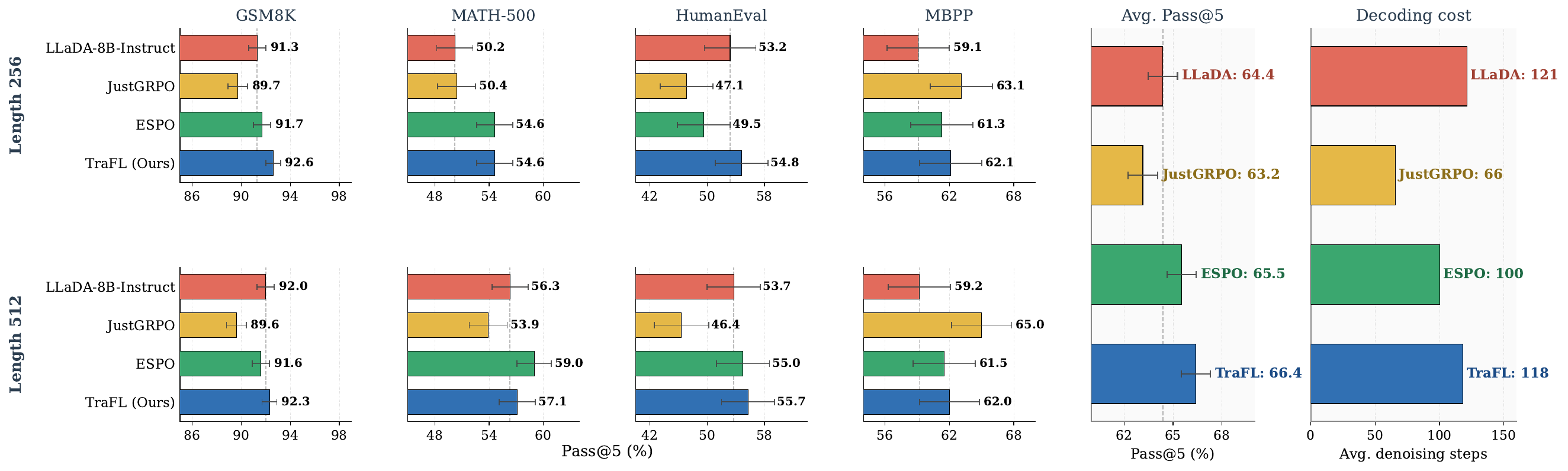}
    \caption{
    \textbf{\ourmodel\ improves Pass@5 across math and coding benchmarks at comparable denoising cost.} 
    Pass@5 on GSM8K, MATH-500, HumanEval, and MBPP at maximum completion lengths 256 and 512 (\(n=16\), \(T=0.6\)). Error bars show standard error. 
    }
    \label{fig:main_pass5}
\end{figure*}
Average denoising steps add a complementary view of the learned policies. LLaDA-8B-Instruct, ESPO, and \ourmodel\ use comparable denoising budgets, while JustGRPO terminates much earlier without consistent Pass@5 gains. Shorter denoising is therefore not by itself a sign of better generation. One possible explanation is that JustGRPO's early termination may reflect a form of premature trajectory locking, where probability mass concentrates early around a narrower set of token choices and leaves fewer opportunities for later denoising steps to explore alternative correct completions. \ourmodel, by contrast, achieves consistent Pass@5 gains while keeping denoising step counts close to the base model and ESPO, suggesting it improves multi-sample coverage through a better allocation of probability mass across successful trajectories.

\subsection{Held-out Benchmark Evaluation}
\label{sec:heldout}

The gains of \ourmodel\ transfer to held-out benchmarks not used during post-training. We evaluate the same checkpoints on the algebra split of Minerva Math~\citep{minerva}, an out-of-distribution math benchmark relative to GSM8K and MATH, and LiveCodeBench~\citep{jain2025livecodebench}, a contamination-free coding benchmark with difficulty-stratified problems. All methods use the same decoding protocol at maximum completion lengths 256 and 512.

\begin{figure*}[ht]
    \centering
    \includegraphics[width=\textwidth]{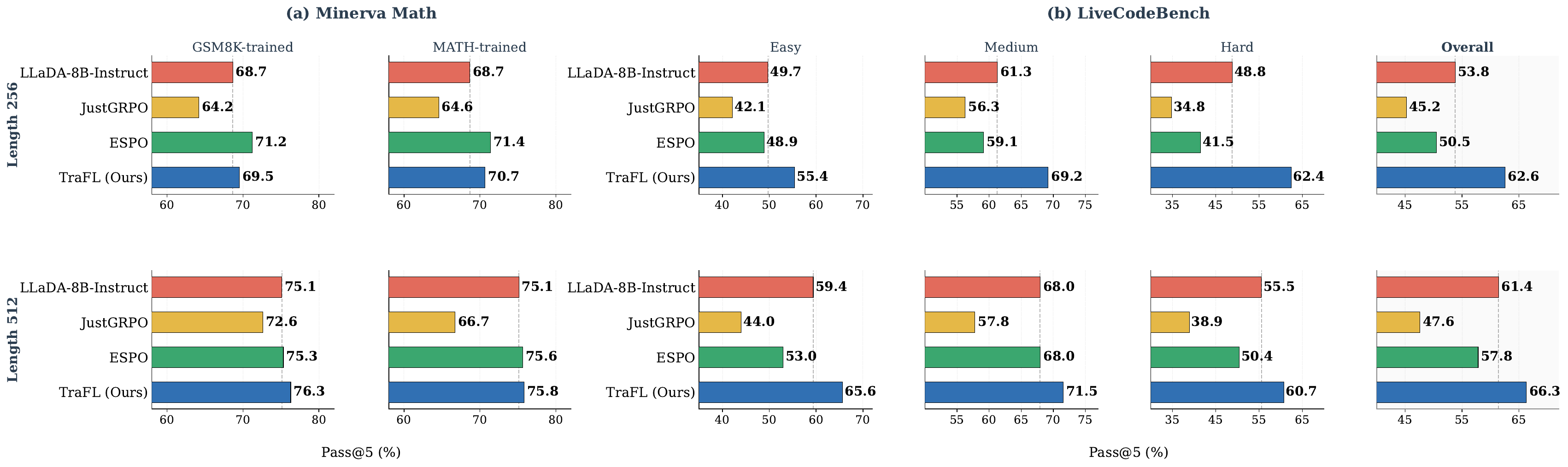}
    \caption{
    \textbf{Gains transfer to held-out math and coding benchmarks.} 
    Pass@5 on Minerva Math (left) using the GSM8K-trained and MATH-trained checkpoints, and on LiveCodeBench (right) by difficulty split, both at maximum completion lengths 256 and 512.
    }
    \label{fig:heldout}
\end{figure*}

On Minerva Math, \ourmodel\ stays above the base model at both lengths and substantially outperforms JustGRPO for both GSM8K-trained and MATH-trained checkpoints. ESPO is stronger at length 256 (71.2--71.4 Pass@5 vs.\ 69.5--70.7 for \ourmodel), but at length 512 \ourmodel\ achieves the best results, reaching 76.3 when trained on GSM8K and 75.8 when trained on MATH.

The pattern is stronger on LiveCodeBench. \ourmodel\ is the best-performing method on every difficulty split and at both generation lengths, improving over the base model from 53.8 to 62.6 overall at length 256 and from 61.4 to 66.3 at length 512. The largest gains appear on medium and hard problems: at length 256, \ourmodel\ improves the base model by 7.9 points on medium and 13.6 points on hard, and outperforms JustGRPO by 12.9 and 27.6 points respectively. ESPO and JustGRPO remain below the base model on the overall LiveCodeBench score at both lengths. The consistent gains observed in the in-distribution analysis therefore generalize beyond the post-training datasets.

\subsection{LLM-as-a-Judge Diversity Evaluation}
\label{sec:llm_as_judge}

Higher Pass@\(k\) does not by itself imply broader mode coverage: a method could improve Pass@\(k\) by sharpening its scoring of a single mode rather than by spreading mass across distinct modes. We therefore directly test whether \ourmodel's gains correspond to coverage of distinct solution strategies, the property that Theorem~\ref{thm:trajectory_diversity} ties to trajectory diversity. We use an LLM-as-a-judge protocol with GPT-5-4 on MATH-500 (IID to our math post-training distribution) and LiveCodeBench (OOD to our coding post-training distribution). For each problem, the judge compares two sets of \(16\) responses \((D_A, D_B)\) generated under matched decoding (max length 256, \(T=0.6\)), with answer-set order randomized to provide the most diverse set (A or B), or tie, \(\mathcal{S}_{LLM}:(D_A,D_B)\rightarrow \{A_{wins}, B_{wins}, Tie\}\). We run pairwise comparisons between \ourmodel\ and each baseline. The full judge prompt is in App.~\ref{app:diversity_judge_prompt}.

The judge evaluates diversity in the underlying solution approach, not surface form. An answer set is preferred only when it contains a broader range of substantively different methods, such as different equation setups, proof strategies, or algorithmic decompositions. The prompt asks the judge to ignore wording, formatting, variable names, verbosity, and minor errors that do not change the core approach. We pre-specify three one-sided sign tests~\citep{conover1999practical} ($p$-value$\leq$0.05) with hypotheses \((\bf H_{alt})\) of the form (a) \(\mathrm{\ourmodel}_{wins}\)$>$ \(\mathrm{Base}_{wins}\), (b) \(\mathrm{ESPO}_{wins}\)$>$\(\mathrm{Base}_{wins}\),and (c) \(\mathrm{\ourmodel}_{wins}\)$>$\(\mathrm{ESPO}_{wins}\) for each benchmark.
\newcommand{\cmark}{\raisebox{0.25ex}{\checkmark}}
\newcommand{\xmark}{\raisebox{0.25ex}{\ding{55}}}
\begin{table*}[!t]
\centering
\caption{
\textbf{LLM-as-a-judge diversity evaluation.}
\ourmodel\ shows the strongest and most consistent diversity gains over correct solutions, where it is preferred over both the base model and ESPO on MATH-500 and LiveCodeBench. ESPO is not significantly more diverse than the base model on either benchmark. ``For''/``Against'' are the fractions of judged problems favoring each side; \(p\)-values are from one-sided exact sign tests over non-tied comparisons.
}
\label{tab:diversity_judge}

\setlength{\tabcolsep}{3.4pt}
\renewcommand{\arraystretch}{1.08}

\begin{adjustbox}{max width=\textwidth}
\begin{tabular}{lccccclccccl}
\toprule
\multirow{2}{*}{\textbf{Benchmark}} 
& \multirow{2}{*}{\(\bf \mathrm{H_{alt}}\)}
& \multicolumn{5}{c}{\textbf{All samples}}
& \multicolumn{5}{c}{\textbf{Correct solutions}} \\
\cmidrule(lr){3-7} \cmidrule(lr){8-12}
& 
& \textbf{For} & \textbf{Against} & \textbf{Tie} & \textbf{\(p\)} & \textbf{Sig.}
& \textbf{For} & \textbf{Against} & \textbf{Tie} & \textbf{\(p\)} & \textbf{Sig.} \\
\midrule

\multirow{3}{*}{MATH-500}
& \ourmodel \(>\) Base
& 21.8 & 15.4 & 62.8 & \(2.3{\times}10^{-2}\) & \cmark
& 12.4 & 5.1 & 82.5 & \(6.0{\times}10^{-3}\) & \cmark \\

& ESPO \(>\) Base
& 18.2 & 17.6 & 64.2 & \(8.8{\times}10^{-1}\) & \textcolor{red}{\xmark}
& 7.5 & 10.0 & 82.6 & \(3.9{\times}10^{-1}\) & \textcolor{red}{\xmark} \\

& \ourmodel \(>\) ESPO
& 21.8 & 14.2 & 64.0 & \(5.7{\times}10^{-3}\) & \cmark
& 14.3 & 5.6 & 80.1 & \(1.0{\times}10^{-3}\) & \cmark \\

\midrule

\multirow{3}{*}{LiveCodeBench}
& \ourmodel \(>\) Base
& 20.9 & 23.0 & 56.1 & \(3.9{\times}10^{-1}\) & \textcolor{red}{\xmark}
& 31.2 & 19.3 & 49.6 & \(4.1{\times}10^{-5}\) & \cmark \\

& ESPO \(>\) Base
& 14.5 & 44.2 & 41.3 & \(1.0{\times}10^{0}\) & \textcolor{red}{\xmark}
& 31.1 & 34.0 & 34.9 & \(7.9{\times}10^{-1}\) & \textcolor{red}{\xmark} \\

& \ourmodel \(>\) ESPO
& 45.9 & 12.2 & 41.9 & \(<10^{-4}\) & \cmark
& 38.3 & 26.4 & 35.3 & \(7.8{\times}10^{-4}\) & \cmark \\

\bottomrule
\end{tabular}
\end{adjustbox}
\end{table*}
\Cref{tab:diversity_judge} separates two notions of diversity. The all-samples setting measures the diversity of the model's sampled behavior overall, including both correct and incorrect responses. Under this view, \ourmodel\ shows a modest but significant gain over the base model on MATH-500, indicating that it explores a broader set of reasoning attempts in the math setting; the signal is not reliable on LiveCodeBench. We do not find support for \(\mathrm{ESPO}_{wins}\)$>$\(\mathrm{Base}_{wins}\). Against ESPO, \ourmodel\ is preferred on both: 21.8\% vs.\ 14.2\% on MATH-500 (\(p=0.0057\)) and 45.9\% vs.\ 12.2\% on LiveCodeBench (\(p<10^{-4}\)).

The correct-solution setting asks the more targeted question, and is the one most directly tied to the trajectory-locking analysis: among responses that solve the problem, does \ourmodel\ cover more distinct valid approaches? We filter each side to its correct samples and keep a comparison only when both sides contain at least one correct solution. Here the signal is stronger and more consistent. Relative to the base model, \ourmodel\ is preferred on both MATH-500 (\(p=0.006\)) and LiveCodeBench (\(p=4.1\times10^{-5}\)). Relative to ESPO, the same holds: 14.3\% vs.\ 5.6\% on MATH-500 (\(p=0.001\)) and 38.3\% vs.\ 26.4\% on LiveCodeBench (\(p=7.8\times10^{-4}\)). The advantage is clearest and most consistent in the correct-solution setting.

Two patterns line up with the trajectory-locking story. ESPO is not significantly more diverse than the base model on either benchmark, despite its Pass@\(k\) gains. This is what we would expect from reward-amplifying optimization without reference anchoring: it improves accuracy by concentrating mass on a narrow support, not by covering more modes. \ourmodel, by contrast, is preferred over ESPO most strongly in the correct-solution setting, where Theorem~\ref{thm:trajectory_diversity} directly applies. The reference-anchored target leaves room for multiple valid reasoning paths to retain significant probability mass, and the diversity gains follow.

\section{Conclusion}
\label{sec:conclusion}

We presented \ourmodel, a trajectory-balance post-training objective for diffusion language models that trains toward a reward-tilted target distribution rather than directly maximizing reward. The key insight is that reward-maximizing objectives are blind to path allocation across denoising trajectories, making sampled updates vulnerable to trajectory locking---a self-reinforcing collapse of path diversity that limits solution coverage. \ourmodel\ avoids this by anchoring to a reference model and learning a prompt-dependent partition function \(Z_\phi(\mathbf{x})\) jointly with the policy. Across mathematical reasoning and code generation benchmarks, \ourmodel\ is the only evaluated method that consistently improves over the base model, outperforming ESPO and JustGRPO on average with gains that grow at larger sampling budgets. The improvements transfer to held-out Minerva Math and LiveCodeBench, and an LLM-as-judge analysis indicates that the gains come with broader coverage of correct solution strategies rather than sharper scoring of a single mode. We hope these results encourage further exploration of distribution-matching objectives for diffusion language model post-training. We discuss limitations and broader impact of our work in App.~\ref{app:limitations_impact}.

\bibliographystyle{plain}
\bibliography{neurips_2026}

\appendix
\newpage\appendix

\textbf{\centering \Large{Overview of Appendix}}

\begin{enumerate}
    \item[A] \textbf{Training Algorithm}
    \item[B] \textbf{Derivation of the Normalized Log-Probability Surrogate}
    \item[C] \textbf{Theory: Path Diversity, Mode Coverage, and Trajectory Balance}
    \item[D] \textbf{Prompt Templates and Sample Rollouts}
    \item [E] \textbf{Dataset Filtering and Training-Data Comparison}
    \item[F] \textbf{Additional Baseline Comparison Details}
    \item[G] \textbf{LLM-as-a-Judge Diversity Prompt}
    \item[H] \textbf{Limitations and Broader Impact}
    \item [I] \textbf{Training Compute}
    \item [J] \textbf{Dataset and Model Licenses}
    
\end{enumerate}

\section{Training Algorithm}
\label{app:algorithm}

Algorithm~\ref{alg:algorithm} summarizes the training procedure for \ourmodel. We use a fully online RL setup: at each step, we sample rollouts from the current policy, compute rewards and surrogate scores on the same batch, and perform exactly one gradient update.

\begin{algorithm}[ht]
\caption{\ourmodel\ training algorithm}
\label{alg:algorithm}
\KwIn{initial policy \(p_{\theta_{\mathrm{init}}}\); frozen reference \(p_{\mathrm{ref}}\); reward \(r\); prompts \(\mathcal{D}\); hparams \(\beta,G,K,M\)}
\KwOut{trained policy \(p_{\theta}\) and normalization predictor \(Z_{\phi}\)}

\(p_{\theta}\gets p_{\theta_{\mathrm{init}}}\), initialize \(Z_{\phi}\)\;

\For{step \(=1\) \textbf{to} \(M\)}{
    Sample prompt batch \(\mathcal{D}_b\sim\mathcal{D}\)\;
    
    \ForEach{\(\mathbf{x}\in\mathcal{D}_b\)}{
        Sample \(\{\mathbf{y}^{(g)}\}_{g=1}^{G}\sim p_{\theta}(\cdot\mid\mathbf{x})\)\;
        
        Compute \(r^{(g)}\gets r(\mathbf{x},\mathbf{y}^{(g)})\) and
        \(\widetilde r^{(g)}\gets r^{(g)}-\frac{1}{G}\sum_{g'=1}^{G}r^{(g')}\)\;
        
        Estimate \({\log \hat p}_{\theta}(\mathbf{y}^{(g)}\mid\mathbf{x})\) and
        \({\log \hat p}_{\mathrm{ref}}(\mathbf{y}^{(g)}\mid\mathbf{x})\)
        with \cref{eq:seq_surrogate} using \(K\) mask samples\;
        
        Compute
        \[
        \delta^{(g)}
        =
        {\log \hat p}_{\theta}(\mathbf{y}^{(g)}\mid\mathbf{x})
        -
        {\log \hat p}_{\mathrm{ref}}(\mathbf{y}^{(g)}\mid\mathbf{x})
        -
        \beta\,\widetilde r^{(g)}
        +
        \log Z_{\phi}(\mathbf{x}) .
        \]
    }
    Update \((\theta,\phi)\) by minimizing
    \[
    \mathcal{L}_{\mathrm{TraFL}}
    =
    \frac{1}{|\mathcal{D}_b|G}
    \sum_{\mathbf{x}\in\mathcal{D}_b}
    \sum_{g=1}^{G}
    \left(\delta^{(g)}\right)^2 .
    \]
}
\end{algorithm}

\section{Derivation of the Normalized Log-Probability Surrogate}
\label{app:surrogate_elbo}

We derive \cref{eq:seq_surrogate} from a masked-reconstruction lower bound on a one-step reconstruction marginal, and explain the role of the \(1/l\) factor.

\paragraph{Setup.}
Let \(\mathbf{y}=(y_1,\dots,y_L)\) be a completion of length \(L\). For each mask level \(l\in\{1,\dots,L\}\), \(q_l(\mathbf{y}_l\mid \mathbf{y})\) selects \(l\) completion-token positions uniformly without replacement and replaces them with \(\mathtt{[mask]}\), so
\[
q_l(\mathbf{y}_l \mid \mathbf{y})=\binom{L}{l}^{-1}
\]
on consistent corruptions and zero elsewhere. The model predicts all masked positions in one forward pass, with
\[
p_\theta(\mathbf{y}\mid \mathbf{y}_l,\mathbf{x})
=
\prod_{i:\,y_i^l=\mathtt{[mask]}}
p_\theta(y_i\mid \mathbf{y}_l,\mathbf{x}).
\]

\paragraph{One-step reconstruction lower bound.}
Define the one-step masked reconstruction marginal at level \(l\) as
\[
p_\theta^{(l)}(\mathbf{y}\mid \mathbf{x})
=
\mathbb{E}_{\mathbf{y}_l\sim q_l(\cdot\mid \mathbf{y})}
\bigl[p_\theta(\mathbf{y}\mid \mathbf{y}_l,\mathbf{x})\bigr].
\]
Jensen's inequality gives
\begin{equation}
\label{eq:oneshot_jensen}
\log p_\theta^{(l)}(\mathbf{y}\mid \mathbf{x})
\ge
\mathbb{E}_{\mathbf{y}_l\sim q_l(\cdot\mid \mathbf{y})}
\!\left[
\sum_{i=1}^{L}
\mathbf{1}[y_i^l=\mathtt{[mask]}]
\log p_\theta(y_i\mid \mathbf{y}_l,\mathbf{x})
\right].
\end{equation}
The indicator picks out exactly \(l\) terms, so this lower bound is the expected sum of \(l\) masked-token log-probabilities. Averaging over \(l\sim\mathrm{Uniform}(\{1,\dots,L\})\) gives
\begin{equation}
\label{eq:avg_over_l_bound}
\mathbb{E}_l\!\left[\log p_\theta^{(l)}(\mathbf{y}\mid \mathbf{x})\right]
\ge
\mathbb{E}_l\,\mathbb{E}_{\mathbf{y}_l}\!\left[
\sum_{i=1}^{L}
\mathbf{1}[y_i^l=\mathtt{[mask]}]
\log p_\theta(y_i\mid \mathbf{y}_l,\mathbf{x})
\right].
\end{equation}

\paragraph{Normalization by \(1/l\).}
\Cref{eq:seq_surrogate} differs from the right-hand side of \cref{eq:avg_over_l_bound} by a factor of \(1/l\) inside the inner bracket. Since the inner sum has \(l\) nonzero terms, \(\frac{1}{l}\sum_{i:\,y_i^l=\mathtt{[mask]}}\log p_\theta(y_i\mid \mathbf{y}_l,\mathbf{x})\) is the arithmetic mean of those \(l\) log-probabilities. \(\log\hat p_\theta(\mathbf{y}\mid\mathbf{x})\) is the expectation of this mean under the random corruption \(\mathbf{y}_l\) and mask level \(l\).

The unnormalized sum scales linearly with the random number of masked positions: at higher mask levels the sum has more (negative) terms, so its magnitude varies with \(l\) even when per-token reconstruction quality does not. Substituting it into \cref{eq:tb_residual} would make the residual size depend on this count. The \(1/l\) factor cancels this dependence, leaving a per-position score. Note that \(\log\hat p_\theta\) is not a length-normalization of any sequence log-likelihood: the denominator is the random count \(l\), not the completion length \(L\), and the inner quantity is a mean over \(l\) masked positions rather than a divided sequence log-probability.

\paragraph{Relation to length-bias diagnostics in autoregressive RL.}
A similar concern about the role of normalization appears in autoregressive RL post-training. Dr.\ GRPO~\citep{liu2025drgrpo} shows that GRPO's per-completion loss normalization by \(|o_i|\) introduces a length bias because \(|o_i|\) varies across the rollout group, and replaces it with a constant. The \(1/l\) factor here plays an analogous role: when a loss aggregates a variable-size set of token-level quantities, the magnitude of the aggregate inherits a count-dependence that is unrelated to per-token quality. The principle is the same in both cases --- normalize so the score is independent of the set size --- though the count being normalized differs (completion length there, mask count here).
\section{Theory: Path Diversity, Mode Coverage, and Trajectory Balance}
\label{app:theory}

This appendix provides full proofs of the main-paper results and extended discussion. The argument in this section proceeds in three steps: (1)~terminal reward is indifferent to path allocation (\cref{prop:path_indifference}); (2)~sampled policy-gradient updates exploit this indifference through trajectory locking; (3)~trajectory locking directly limits terminal mode coverage (\cref{thm:trajectory_diversity,cor:locking}). The subsections below prove each step in turn and then explain how the trajectory-balance perspective motivates \ourmodel. 

\subsection{Full proofs of the main-paper results}
\label{app:theory_mainproofs}

\begin{repeatedtheorembox}
\textbf{Proposition~\ref{prop:path_indifference} (Path-indifference of terminal reward).}
Fix a prompt \(\mathbf{x}\) and a final completion \(\mathbf{y}\). Let \(\mathcal{T}(\mathbf{y})\) denote the set of denoising trajectories that terminate at \(\mathbf{y}\). Any redistribution of probability mass among trajectories in \(\mathcal{T}(\mathbf{y})\) that preserves \(p_\theta(\mathbf{y}\mid\mathbf{x})\) leaves a terminal-reward objective unchanged.
\end{repeatedtheorembox}

\begin{proof}
The terminal-reward objective decomposes as
\[
J(\mathbf{x})=\sum_{\mathbf{y}} p_\theta(\mathbf{y}\mid \mathbf{x})\, r(\mathbf{x},\mathbf{y}),
\qquad
p_\theta(\mathbf{y}\mid \mathbf{x})=\sum_{\tau\in\mathcal{T}(\mathbf{y})} p_\theta(\tau\mid \mathbf{x}).
\]
The contribution of \(\mathbf{y}\) to \(J\) depends only on the scalar \(p_\theta(\mathbf{y}\mid\mathbf{x})\). Therefore, any redistribution of mass among trajectories in \(\mathcal{T}(\mathbf{y})\) that preserves this sum leaves the objective unchanged.
\end{proof}

\begin{repeatedtheorembox}
\textbf{Theorem~\ref{thm:trajectory_diversity} (Trajectory diversity is necessary for mode coverage).}
Let \(\mathcal{T}\) be a random denoising trajectory taking values in a countable set \(\Omega_{\mathcal{T}}\), and let \(M=g(\mathcal{T})\) denote its terminal solution mode, where \(g:\Omega_{\mathcal{T}}\to\Omega_M\) is a deterministic mapping. Then
\[
|\mathrm{supp}(M)| \le |\mathrm{supp}(\mathcal{T})|
\qquad\text{and}\qquad
H(M)\le H(\mathcal{T}),
\]
where \(\mathrm{supp}(\cdot)\) denotes the support of a distribution and \(H(\cdot)\) the Shannon entropy. In particular, terminal mode coverage cannot exceed trajectory diversity.
\end{repeatedtheorembox}

\begin{proof}
Because \(M=g(\mathcal{T})\) is a deterministic function of \(\mathcal{T}\), every mode \(m\in\mathrm{supp}(M)\) must be the image of at least one trajectory \(\tau\in\mathrm{supp}(\mathcal{T})\). Hence \(\mathrm{supp}(M)=g(\mathrm{supp}(\mathcal{T}))\), which gives \(|\mathrm{supp}(M)| \le |\mathrm{supp}(\mathcal{T})|\).

For entropy, since \(M\) is a deterministic function of \(\mathcal{T}\), we have \(H(M\mid \mathcal{T})=0\). By the chain rule,
\[
H(\mathcal{T},M)=H(\mathcal{T})+H(M\mid \mathcal{T})=H(\mathcal{T}).
\]
On the other hand, \(H(\mathcal{T},M)=H(M)+H(\mathcal{T}\mid M)\ge H(M)\). Combining gives \(H(M)\le H(\mathcal{T})\).
\end{proof}

\begin{repeatedtheorembox}
\textbf{Corollary~\ref{cor:locking} (Trajectory locking upper-bounds mode coverage).}
If optimization reduces the trajectory distribution to a subset \(\mathcal{S}\subseteq\Omega_{\mathcal{T}}\), then the set of terminal modes that can be covered is at most \(g(\mathcal{S})\subseteq\Omega_M\). In particular, collapse to a single trajectory implies collapse to a single terminal mode.
\end{repeatedtheorembox}

\begin{proof}
Immediate from \cref{thm:trajectory_diversity}, since \(M=g(\mathcal{T})\).
\end{proof}

\subsection{Sampled reward optimization and trajectory locking}
\label{app:trajectory_locking}

\cref{prop:path_indifference} is an objective-level statement: terminal reward is flat over the entire subspace of trajectory distributions that assign the same mass to each completion. In practice, however, optimization proceeds through sampled updates, and this flatness becomes a liability.

For terminal reward, the policy-gradient update takes the form
\begin{equation}
\label{eq:policy_gradient_terminal}
\nabla J_{\mathrm{RL}}
=
\mathbb{E}_{\tau\sim p_\theta(\cdot\mid \mathbf{x})}
\left[
r(\mathbf{x},\mathbf{y}(\tau))\,\nabla \log p_\theta(\tau\mid \mathbf{x})
\right].
\end{equation}
Only sampled trajectories receive direct gradient signal. Because \(r(\mathbf{x},\mathbf{y})\) is constant across all trajectories reaching the same completion \(\mathbf{y}\), the gradient provides no force to spread mass across different paths to the same answer. Instead, if two trajectories both reach a rewarding completion but one is sampled slightly more often early in training, it receives more positive updates, becomes more likely, and is sampled even more in subsequent steps. This self-reinforcing feedback loop---\emph{trajectory locking}---concentrates probability mass onto a narrow subset of paths despite the underlying objective being indifferent among them.

Trajectory locking is an optimization phenomenon, not a formal consequence of \cref{prop:path_indifference}. The proposition establishes the flat landscape; locking describes how stochastic gradient ascent can move along that landscape toward a vertex or low-dimensional face. We therefore use it as a mechanism explaining why sampled reward maximization may reduce coverage in practice, not as a theorem that exact expected policy-gradient updates must always collapse.

\subsection{A generalized simplex view of trajectory concentration}
\label{app:replicator}

The discussion above concerns strict terminal reward, under which all trajectories leading to the same terminal completion receive the same reward. The following analysis considers a more general setting in which optimization induces effective trajectory-level scores, for example through sampling asymmetries, surrogate objectives, or path-dependent credit assignment. It should therefore be viewed as an auxiliary lens on trajectory concentration rather than a literal description of pure terminal reward.

Fix a completion \(\mathbf{y}\), and let
\[
\mathcal{T}(\mathbf{y})=\{\tau_1,\dots,\tau_K\}.
\]
Define the conditional trajectory distribution
\[
q_i := p_\theta(\tau_i\mid \mathbf{y},\mathbf{x}),
\qquad
q=(q_1,\dots,q_K)\in \Delta^K,
\]
and suppose each trajectory \(\tau_i\) is assigned an effective scalar score \(r_i\). Consider the linear objective
\begin{equation}
\label{eq:linear_simplex_objective}
J(q)=\sum_{i=1}^K q_i r_i.
\end{equation}
Because \cref{eq:linear_simplex_objective} is linear in \(q\), its maximizers lie on extreme points of the simplex when one score dominates, and on a face of the simplex when several scores tie. Thus the objective itself does not prefer dispersed trajectory distributions.

Now parameterize \(q\) by logits \(\phi\) through
\[
q_i=\frac{e^{\phi_i}}{\sum_j e^{\phi_j}}.
\]
The Euclidean gradient of \(J\) with respect to \(\phi\) is
\begin{equation}
\label{eq:logit_gradient}
\frac{\partial J}{\partial \phi_i}=q_i(r_i-\bar r),
\qquad
\bar r=\sum_j q_j r_j.
\end{equation}
\Cref{eq:logit_gradient} already shows that higher-than-average reward increases the corresponding logit. To obtain the standard replicator dynamics in probability space, however, one must specify a particular continuous-time flow in logit space, and this flow is \emph{not} the ordinary Euclidean gradient flow on the logits. Instead, consider \emph{additive payoff dynamics},
\begin{equation}
\label{eq:additive_payoff}
\dot{\phi}_i = r_i.
\end{equation}
This choice corresponds to mirror-descent-style updates on the simplex, and equivalently to a natural-gradient geometry in probability space \citep{amari1998natural,beck2003mirror}. Under this assumed logit flow, the induced dynamics of \(q_i\) are obtained by the chain rule:
\[
\dot{q}_i
=
\sum_j \frac{\partial q_i}{\partial \phi_j}\,\dot{\phi}_j,
\qquad
\frac{\partial q_i}{\partial \phi_j}
=
q_i(\delta_{ij}-q_j).
\]
Substituting \(\dot{\phi}_j = r_j\) gives
\[
\dot{q}_i
=
\sum_j q_i(\delta_{ij}-q_j)\,r_j
=
q_i\bigl(r_i-\bar r\bigr),
\]
which is the replicator equation:
\begin{equation}
\label{eq:replicator}
\dot q_i = q_i(r_i-\bar r).
\end{equation}
More generally, under \(\dot\phi_i = \eta r_i\), the induced flow
\(
\dot q_i = \eta\, q_i(r_i-\bar r)
\)
is proportional to the replicator vector field and differs only by a rescaling of time. The solution to \cref{eq:replicator} is
\begin{equation}
\label{eq:replicator_solution}
q_i(t)=\frac{q_i(0)\,e^{r_i t}}{\sum_{j=1}^K q_j(0)\,e^{r_j t}},
\end{equation}
so relative mass evolves as
\[
\frac{q_i(t)}{q_j(t)}
=
\frac{q_i(0)}{q_j(0)}\,e^{(r_i-r_j)t}.
\]
When \(r_i>r_j\), this ratio grows exponentially, driving the conditional trajectory distribution toward a simplex vertex or a low-dimensional face. This makes precise the sense in which reward-style optimization can be mode-seeking when effective path-level growth rates differ.

\paragraph{Remark on standard Euclidean gradient ascent.}
For completeness, we note that ordinary Euclidean gradient ascent on the logits, \(\dot{\phi}_i \propto \partial J/\partial \phi_i = q_i(r_i-\bar r)\), instead induces the cubic dynamic
\[
\dot{q}_i \propto q_i\!\left[q_i(r_i-\bar r) - \sum_j q_j^2(r_j-\bar r)\right],
\]
which does not admit the closed-form exponential reweighting in \cref{eq:replicator_solution}. We work with the replicator form throughout because it admits a clean exponential solution and makes the mode-seeking geometry of reward-style optimization transparent. The qualitative concentration conclusion---that higher-scoring trajectories acquire exponentially more mass relative to lower-scoring ones---is robust to this choice: both flows drive \(q\) toward simplex vertices or low-dimensional faces when scores differ. The replicator equation should therefore be viewed as an idealized simplification of the actual logit dynamics, chosen for analytical tractability rather than as the literal flow induced by standard policy-gradient updates.

\subsection{An idealized trajectory-balance interpretation}
\label{app:idealized_tb}

This subsection describes an idealized exact trajectory-balance view that motivates our design. \ourmodel\ is inspired by this principle, but in practice uses a diffusion-compatible sequence-level surrogate and a learned prompt-dependent normalization term rather than exact trajectory probabilities.

Let \(P_F(\tau)\) denote a forward trajectory distribution and let \(P_B(\tau\mid \mathbf{y},\mathbf{x})\) denote a backward distribution over trajectories conditioned on a terminal completion \(\mathbf{y}\). In an exact trajectory-balance formulation, one enforces
\begin{equation}
\label{eq:idealized_tb}
\log Z(\mathbf{x}) + \log P_F(\tau\mid \mathbf{x})
=
\beta r(\mathbf{x},\mathbf{y}) + \log P_B(\tau\mid \mathbf{y},\mathbf{x}),
\qquad \forall \tau\in\mathcal{T}(\mathbf{y}).
\end{equation}
Equivalently,
\begin{equation}
\label{eq:idealized_tb_factorization}
P_F(\tau\mid \mathbf{x})
=
\frac{1}{Z(\mathbf{x})}
\exp\!\bigl(\beta r(\mathbf{x},\mathbf{y})\bigr)\,
P_B(\tau\mid \mathbf{y},\mathbf{x}).
\end{equation}
Summing over \(\tau\in\mathcal{T}(\mathbf{y})\) yields
\begin{equation}
\label{eq:terminal_mass_reward_tilt}
P_F(\mathbf{y}\mid \mathbf{x})
=
\frac{1}{Z(\mathbf{x})}
\exp\!\bigl(\beta r(\mathbf{x},\mathbf{y})\bigr).
\end{equation}
Therefore, reward determines only the \emph{total} mass assigned to terminal completion \(\mathbf{y}\). Conditioned on \(\mathbf{y}\), we obtain
\begin{equation}
\label{eq:conditional_path_allocation}
P_F(\tau\mid \mathbf{y},\mathbf{x})=P_B(\tau\mid \mathbf{y},\mathbf{x}).
\end{equation}
This idealized factorization separates terminal weighting from within-terminal path allocation. Unlike direct reward amplification, it does not repeatedly favor one trajectory over another merely because they terminate at the same rewarding output.

We stress again that \cref{eq:idealized_tb,eq:idealized_tb_factorization,eq:terminal_mass_reward_tilt,eq:conditional_path_allocation} describe an exact idealized trajectory-balance picture. They are intended as conceptual motivation for \ourmodel, not as exact identities satisfied by the practical surrogate objective in \cref{sec:method}.

\subsection{Relation to the practical \ourmodel\ objective}
\label{app:theory_to_method}

The sections above build the theoretical case, but \ourmodel's practical objective differs from the idealized picture in two important ways.

\textbf{Reference model instead of explicit backward model.} The idealized TB factorization in \cref{app:idealized_tb} uses a backward distribution \(P_B(\tau\mid\mathbf{y},\mathbf{x})\) to govern within-completion path allocation. In \ourmodel, this role is played by the frozen reference model: the reward-tilted target
\[
p^*(\tau\mid \mathbf{x})
\propto
p_{\mathrm{ref}}(\tau\mid \mathbf{x})
\exp\!\bigl(\beta r(\mathbf{x},\mathbf{y})\bigr)
\]
factors, after marginalization over \(\tau\in\mathcal{T}(\mathbf{y})\), into
\begin{enumerate}[label=(\roman*)]
    \item a completion-level marginal proportional to \(\exp\!\bigl(\beta r(\mathbf{x},\mathbf{y})\bigr)\,p_{\mathrm{ref}}(\mathbf{y}\mid\mathbf{x})\), which determines how much total mass each completion receives, jointly through the reward tilt and the reference model's marginal over completions; and
    \item a within-completion conditional path distribution \(p_{\mathrm{ref}}(\tau\mid \mathbf{y},\mathbf{x})\), which governs how that mass is spread across the denoising trajectories that reach the same completion.
\end{enumerate}
This is precisely the separation that policy-gradient reward maximization lacks: reward acts only on (i), while (ii) is left undefined by the objective and is determined implicitly by sampling, enabling trajectory locking.

\textbf{Completion-level surrogate instead of exact trajectory scores.} Diffusion language models do not expose exact trajectory log-probabilities, so the practical residual in \cref{eq:tb_residual} uses the masked-token surrogate \({\log \hat p}_\theta(\mathbf{y}\mid\mathbf{x})\) from \cref{eq:seq_surrogate} together with a learned prompt-dependent predictor for \(\log Z(\mathbf{x})\). As a result, the implemented loss compares current and reference models at the level of sampled completions, not at the level of latent denoising paths. The practical objective is therefore inspired by the idealized trajectory-balance equation and inherits its intuition, but it does not literally enforce equality of within-completion path distributions.

\textbf{Centered rewards as variance reduction.} In the implementation, we replace \(r(\mathbf{x},\mathbf{y})\) by rewards centered within each prompt group. Because the centering term depends on the sampled group rather than only on \(\mathbf{x}\), it cannot in general be absorbed exactly into the deterministic normalization term \(Z(\mathbf{x})\). This centering should therefore be interpreted as a stochastic variance-reduction heuristic for optimization, not as an exact rewriting of the idealized target distribution.

\subsection{Why DMPO's batch-softmax \texorpdfstring{$\hat{Z}$}{Z-hat} still induces trajectory locking}
\label{app:dmpo_replicator}

DMPO~\citep{dmpo} targets the same reward-tilted distribution \(p^*\) as \ourmodel\ but estimates \(Z(\mathbf{x})\) without a learned network, instead normalizing importance weights by a softmax over the current training buffer. Concretely, for each rollout \(o^{(n)}\) it defines
\[
w^{(n)} = \frac{\exp(\ell^{(n)})}{\sum_{k} \exp(\ell^{(k)})},
\qquad
\ell^{(n)} = \frac{r^{(n)}}{\alpha} + \log\frac{p_{\mathrm{ref}}(o^{(n)}\mid\mathbf{x})}{p_v(o^{(n)}\mid\mathbf{x})},
\]
and minimizes \(\sum_n w^{(n)} \mathcal{L}_{\mathrm{DCE}}(o^{(n)})\). This batch-softmax step is an empirical estimator of \(\hat{Z}(\mathbf{x})\).

Treating the weights \(w^{(n)}\) as stop-gradient targets (as in DMPO's implementation), the WDCE objective drives the buffer logits \(\phi^{(n)} = \log p_\theta(o^{(n)}\mid\mathbf{x})\) according to additive payoff dynamics
\[
\dot{\phi}^{(n)} \propto w^{(n)}.
\]
This is precisely the additive payoff regime analyzed in \cref{app:replicator}: under additive logit dynamics, the chain rule through the softmax Jacobian yields the replicator equation on the conditional buffer distribution,
\[
\dot{q}^{(n)} = q^{(n)}\bigl(w^{(n)} - \bar{w}\bigr),
\qquad
q^{(n)} = \frac{p_\theta(o^{(n)}\mid\mathbf{x})}{\sum_k p_\theta(o^{(k)}\mid\mathbf{x})},
\]
with solution
\[
q^{(n)}(t) = \frac{q^{(n)}(0)\,e^{w^{(n)} t}}{\sum_{k} q^{(k)}(0)\,e^{w^{(k)} t}}.
\]
The highest-weight completion therefore acquires exponentially more mass relative to all others in the buffer, and any completion not present in the buffer receives zero gradient signal. As the policy concentrates, future buffers are sampled from an increasingly narrow distribution, so unseen modes remain unseen. Unlike \ourmodel's learned \(Z_\phi\), which receives gradient signal at every step regardless of which completions are sampled, the batch-softmax \(\hat{Z}\) degrades to a point estimate over the locked support, removing the correction signal that would otherwise counteract collapse.

\section{Prompt Templates and Sample Rollouts}
\label{app:prompts}

Below, we present two representative sampled trajectories, one from the coding domain and one from mathematical reasoning. Each trajectory consists of a task prompt and a completion sampled from the current policy. We use the same prompting format for both training and evaluation: all task instructions and response-format constraints are specified directly in the user prompt, with no separate system prompt.

\begin{tcolorbox}[code_prompt_rollout]
\tiny
\textbf{\color{systemgray}Rollout Example: User Prompt and Sample Model Output}\\[0.4em]

{\color{tealprompt}\textbf{[User Prompt]}}\\
{\color{tealprompt}You are a coding expert. You will be given a coding problem to solve.}\\
{\color{tealprompt}Solve it step by step. Ensure you wrap the answer in \texttt{```python```}.}\\[0.5em]

{\color{tealprompt}\textbf{Response Format}}\\
{\color{tealprompt}\texttt{<reasoning>}} Your reasoning here {\color{tealprompt}\texttt{</reasoning>}}\\
{\color{tealprompt}\texttt{<answer>}} \texttt{```python} ... \texttt{```} {\color{tealprompt}\texttt{</answer>}}\\[0.75em]

{\color{tealprompt}\textbf{[Task Prompt]}}\\
{\color{tealprompt}Given a list of subject IDs and a list of trial types, create a function that generates a dataset as a dictionary. Each subject ID is a key, and its value is another dictionary mapping trial types to strings formatted as \texttt{Subject \{subject\_id\}, Trial \{trial\_type\}}.}\\[0.75em]

{\color{purpleprompt}\textbf{[Model Output]}}\\
{\color{reasoningcolor}\texttt{<reasoning>}}\\
To solve this problem: (1) iterate over subject IDs; (2) create a nested dictionary for each subject; (3) map each trial type to the required formatted string; and (4) return the completed dataset.\\
{\color{reasoningcolor}\texttt{</reasoning>}}\\[0.4em]

{\color{reasoningcolor}\texttt{<answer>}}
\begin{lstlisting}[style=mypython]
def generate_dataset(subject_ids, trial_types):
    dataset = {}
    for subject_id in subject_ids:
        dataset[subject_id] = {}
        for trial_type in trial_types:
            dataset[subject_id][trial_type] = (
                f"Subject {subject_id}, Trial {trial_type}"
            )
    return dataset
\end{lstlisting}
{\color{reasoningcolor}\texttt{</answer>}}
\end{tcolorbox}

\begin{tcolorbox}[code_prompt_rollout]
\tiny
\textbf{\color{systemgray}Rollout Example: User Prompt and Sample Model Output}\\[0.4em]

{\color{tealprompt}\textbf{[User Prompt]}}\\
{\color{tealprompt}You are a math expert. You will be given a question to solve.}\\
{\color{tealprompt}Solve it step by step. Wrap the final answer in \texttt{\textbackslash boxed\{\}} inside the <answer> tags..}\\[0.5em]

{\color{tealprompt}\textbf{Response Format}}\\
{\color{tealprompt}\texttt{<reasoning>}} Your reasoning here {\color{tealprompt}\texttt{</reasoning>}}\\
{\color{tealprompt}\texttt{<answer>}} \texttt{\textbackslash boxed\{...\}} {\color{tealprompt}\texttt{</answer>}}\\[0.75em]

{\color{tealprompt}\textbf{[Task Prompt]}}\\
{\color{tealprompt}Mark makes custom dog beds. A bed for a Rottweiler takes 8 pounds of stuffing, a bed for a chihuahua takes 2 pounds of stuffing, and a bed for a collie takes the average amount of stuffing between the first two kinds of beds. How many pounds of stuffing does Mark need to make 4 chihuahua beds and 3 collie beds?}\\[0.75em]

{\color{purpleprompt}\textbf{[Model Output]}}\\
{\color{reasoningcolor}\texttt{<reasoning>}}\\
First, a collie bed uses the average amount of stuffing between a Rottweiler bed and a chihuahua bed, so it needs
\[
\frac{8 + 2}{2} = 5
\]
pounds.

Next, 4 chihuahua beds require
\[
4 \times 2 = 8
\]
pounds, and 3 collie beds require
\[
3 \times 5 = 15
\]
pounds.

Therefore, the total amount of stuffing needed is
\[
8 + 15 = 23.
\]
{\color{reasoningcolor}\texttt{</reasoning>}}\\[0.4em]

{\color{reasoningcolor}\texttt{<answer>}}\\
\[
\boxed{23}
\]
{\color{reasoningcolor}\texttt{</answer>}}
\end{tcolorbox}

\section{Dataset Filtering and Training-Data Comparison}
\label{app:dataset_filtering}

This appendix gives the filtering procedures used to construct \ourmodel's coding training set and summarizes how the training data of \ourmodel\ relates to the data used by the public baseline checkpoints we compare against.

\subsection{Filtering Procedures}
\label{app:filtering_procedures}

\paragraph{AceCode-89K.}
We use existing CodeLlama-Instruct~\citep{codellamaopenfoundation} rollout metadata distributed with AceCode-89K to filter problems by reward uncertainty under the proxy model. We retain problems with substantial reward uncertainty under that proxy, removing both trivially easy problems (uniformly passed) and uniformly failed problems for which binary feedback would not be informative. This yields 11,937 examples.

\paragraph{KodCode-Light-RL-10K.}
For KodCode, we compute per-example pass-rate statistics from GPT-4o and DeepSeek-R1~\citep{deepseekr1} rollouts. We identify the weaker proxy on average, retain examples with high pass-sequence variance under that weaker proxy, and then keep the lower-pass-rate portion of this diverse subset. This focuses training on harder coding problems while still admitting informative binary feedback. The resulting subset contains 2,695 examples.

\paragraph{Combined coding training set.}
\ourmodel's coding model is trained on the concatenation of these two filtered subsets (14,632 examples total). The filtering targets problems with informative binary feedback rather than removing examples on stylistic or domain grounds.

\subsection{Training-Data Comparison Across Methods}
\label{app:training_data_comparison}

The public checkpoints of our direct baselines (\textsc{JustGRPO} and \textsc{ESPO}) were trained on different data mixtures than \ourmodel. Tab.~\ref{tab:training_data_comparison} summarizes the training data used by each method we compare against, so that algorithmic and data differences are not conflated when reading the empirical results.

\begin{table}[h]
\centering
\small
\caption{Training-data summary across the methods we compare against. All methods train separate task-specific checkpoints for GSM8K and for MATH (reported separately throughout the paper). For coding, the public \textsc{JustGRPO} and \textsc{ESPO} checkpoints we evaluate were trained on substantially larger coding mixtures than \ourmodel.}
\label{tab:training_data_comparison}
\setlength{\tabcolsep}{4pt}
\begin{tabular}{llll}
\toprule
Method & GSM8K-trained & MATH-trained & Code training data \\
\midrule
\textsc{JustGRPO}~\citep{ni2026flexibility} & GSM8K (7,473) & MATH (7,500) & AceCode-87K (21,000 hard samples) \\
\textsc{ESPO}~\citep{ou2026espo_principled} & GSM8K (7,473) & MATH (7,500) & AceCode-87K (21,000 hard samples) \\
\multirow{2}{*}{\ourmodel\ (Ours)}
  & \multirow{2}{*}{GSM8K (7,473)} & \multirow{2}{*}{MATH (7,500)}
  & AceCode-89K filtered (11,937) \\
  & & & + KodCode filtered (2,695) \\
\bottomrule
\end{tabular}
\end{table}

Two aspects of this comparison are worth noting. First, all methods follow the same per-task convention for math, training one checkpoint on GSM8K and a separate checkpoint on MATH, so the math comparisons are matched in training-data scale and source. Second, the coding training set used by \ourmodel\ (14,632 examples) is smaller than the coding mixtures used to produce the public \textsc{JustGRPO} and \textsc{ESPO} coding checkpoints, so coding comparisons are conservative with respect to \ourmodel: any algorithmic advantage we report on coding holds despite \ourmodel\ being trained on less data than the baselines we compare against. None of these training-data differences affect the held-out evaluations on Minerva Math and LiveCodeBench, where the conclusions reported in the main text are drawn under matched decoding settings on benchmarks that no method was trained on directly.

\section{Additional Baseline Comparison Details}
\label{app:baseline_details}

In our main experiments, we directly compare against \textsc{JustGRPO}~\citep{ni2026flexibility} and \textsc{ESPO}~\citep{ou2026espo_principled}, since these were the strongest closely related diffusion-LM post-training baselines with usable public checkpoints. This enables a controlled comparison in which all directly evaluated methods are run under the same decoding protocol and evaluation pipeline.
\begin{table*}[!t]
\centering
\small
\caption{Single-sample accuracy (\%) on reasoning and coding benchmarks. Results are reported for two maximum completion lengths, 256 and 512. Results marked with \(\dagger\) are quoted directly from the original papers; these entries are included for context rather than as fully controlled direct comparisons.}
\label{tab:appendix_baseline_results}
\setlength{\tabcolsep}{4pt}
\begin{tabular}{l cc cc cc cc c}
\toprule
& \multicolumn{2}{c}{GSM8K} & \multicolumn{2}{c}{MATH-500} & \multicolumn{2}{c}{HumanEval} & \multicolumn{2}{c}{MBPP} & \\
\cmidrule(lr){2-3} \cmidrule(lr){4-5} \cmidrule(lr){6-7} \cmidrule(lr){8-9}
Method & 256 & 512 & 256 & 512 & 256 & 512 & 256 & 512 & Avg. \\
\midrule

D1\(\dagger\)~\citep{zhao2025diffuGRPO}
& 81.1 & 82.1 & 38.6 & 40.2 & 32.9 & 37.8 & 44.7 & 42.8 & 50.03 \\
DMPO\(\dagger\)~\citep{dmpo}
& 82.41 & 85.22 & 38.20 & 42.80 & -- & -- & -- & -- & -- \\

\bottomrule
\end{tabular}
\end{table*}

For completeness, \cref{tab:appendix_baseline_results} also reports published results for \textsc{D1}~\citep{zhao2025diffuGRPO} and \textsc{DMPO}~\citep{dmpo}, copied from their respective papers. These methods are closely related to our setting, but their public checkpoints were not available at the time of our experiments. We therefore include their numbers only as contextual reference points, rather than as fully controlled comparisons. The table marks such quoted results with \(\dagger\).

\section{LLM-as-a-Judge Diversity Prompt}
\label{app:diversity_judge_prompt}

We use the following prompt for the LLM-as-a-judge diversity evaluation. 
The prompt is designed to isolate differences in solution strategy, rather than correctness or surface-level variation. 
For each comparison, \texttt{\{problem\}}, \texttt{\{set\_a\}}, and \texttt{\{set\_b\}} are replaced with the problem statement and the two sampled answer sets being compared.

\begin{tcolorbox}
\tiny
\textbf{\color{systemgray}LLM-as-a-Judge Prompt for Diversity Evaluation}\\[0.4em]

{\color{tealprompt}\textbf{[Judge Role]}}\\
{\color{tealprompt}You are an expert judge evaluating diversity among LLM-generated answers to the same problem.}\\
{\color{tealprompt}Your goal is to assess approach diversity, not correctness or answer quality.}\\[0.75em]

{\color{purpleprompt}\textbf{[Problem]}}\\
{\color{purpleprompt}\texttt{\{problem\}}}\\[0.75em]

{\color{purpleprompt}\textbf{[Set A]}}\\
{\color{purpleprompt}\texttt{\{set\_a\}}}\\[0.75em]

{\color{purpleprompt}\textbf{[Set B]}}\\
{\color{purpleprompt}\texttt{\{set\_b\}}}\\[0.75em]

{\color{tealprompt}\textbf{[Definition of Distinct Approach]}}\\
{\color{tealprompt}An answer represents a distinct approach only if it uses a recognizably different core method, such as a different algorithm, data structure, mathematical reduction, equation setup, case analysis, proof strategy, or reasoning path.}\\[0.75em]

{\color{reasoningcolor}\textbf{[Do Not Count as Diversity]}}\\
{\color{reasoningcolor}-- different wording, formatting, verbosity, or variable names}\\
{\color{reasoningcolor}-- different final answers from the same method}\\
{\color{reasoningcolor}-- minor bugs in the same method}\\
{\color{reasoningcolor}-- random guesses or incoherent reasoning}\\
{\color{reasoningcolor}-- extra explanation that does not change the core method}\\[0.75em]

{\color{tealprompt}\textbf{[Instructions]}}\\
{\color{tealprompt}1. Identify the distinct approach types in each set.}\\
{\color{tealprompt}2. Compare the number and substance of these approach types.}\\
{\color{tealprompt}3. Prefer TIE if the difference is mostly superficial or unclear.}\\[0.75em]

{\color{purpleprompt}\textbf{[Response Format]}}\\
{\color{purpleprompt}Return exactly this JSON:}\\
{\color{purpleprompt}\texttt{\{\{}}\\
{\color{purpleprompt}\texttt{\ \ "set\_a\_approaches": ["short names of distinct approaches"],}}\\
{\color{purpleprompt}\texttt{\ \ "set\_b\_approaches": ["short names of distinct approaches"],}}\\
{\color{purpleprompt}\texttt{\ \ "winner": "A" | "B" | "TIE",}}\\
{\color{purpleprompt}\texttt{\ \ "confidence": "low" | "medium" | "high",}}\\
{\color{purpleprompt}\texttt{\ \ "reason": "one concise sentence"}}\\
{\color{purpleprompt}\texttt{\}\}}}
\end{tcolorbox}

\section{Limitations and Broader Impact}
\label{app:limitations_impact}

\subsection{Limitations}
\label{app:limitations}

Our study has several limitations. First,~\ourmodel\ relies on a diffusion-compatible sequence-level surrogate rather than exact trajectory log-probabilities. While this surrogate makes the objective practical for diffusion language models, it remains only an approximation to the ideal likelihood term. As a result, the optimized objective may not perfectly match the intended reward-tilted trajectory distribution, and performance may depend on surrogate design choices such as the masking scheme and Monte Carlo estimation strategy.

Second, our current experiments use terminal binary rewards: exact-match verification for math and execution-based test passing for code. Such rewards are simple and robust, but they are also coarse. In particular, when all sampled completions for a prompt receive the same reward, the centered reward term provides little or no relative learning signal. This makes training sensitive to rollout diversity, group size, and task difficulty, and may limit learning efficiency on prompts where successes are extremely rare or nearly universal.

\subsection{Broader Impact}
\label{sec:impact}

This work studies post-training objectives for diffusion language models, with experiments on mathematical reasoning and code generation. A potential positive impact of this line of research is improved reliability and usefulness of language models in domains that benefit from verifiable reasoning, such as education, scientific assistance, and software development. In particular, methods that improve solution coverage under sampling may help models produce a broader set of valid answers or implementations, which can be useful in settings where multiple correct solutions exist.

At the same time, stronger reasoning and code-generation models also carry risks. Improvements in code synthesis could be misused to generate malicious scripts, automate parts of cyberattacks, or lower the barrier to producing harmful software. Similarly, more capable reasoning models can be used to generate misleading technical explanations, polished but incorrect solutions, or other content that appears trustworthy despite being wrong. Even when used as intended, errors in model output may still mislead users in high-stakes settings if generations are accepted without verification.

Our work is primarily methodological and is not tied to a deployed system. Any real-world deployment of more capable post-training methods should, therefore, be accompanied by careful evaluation, monitoring for misuse, and appropriate safeguards to reduce the risk of harmful or misleading outputs.

\section{Training Compute}
\label{app:compute}
The post-training experiments can be reproduced on any 8 card modern parallel computing unit (NPUs, TPUs, GPUs) with at least 80 GB memory.
Each math post-training experiment takes approximately \(10\) hours, and the code post-training experiment takes approximately \(18\) hours. 

\section{Dataset and Model Licenses}
\label{app:dataset_model_licenses}

We use publicly released datasets, benchmarks, and models under their respective licenses. 
GSM8K~\citep{cobbe2021gsm8k} is released under the MIT License. 
MATH~\citep{lightman2023math500} is released under the MIT License. 
HumanEval~\citep{chen2021humaneval} is released under the MIT License. 
MBPP~\citep{mbpp} is released under CC BY 4.0. 
AceCode-87K~\citep{AceCoder} is released under the MIT License. 
KodCode-Light-RL-10K~\citep{xu2025kodcode} is released under CC BY-NC 4.0 and is used only for non-commercial research. 
Minerva Math~\citep{minerva} is listed under the MIT License in the public release we use. 
LiveCodeBench~\citep{jain2025livecodebench} release v5 is released under the Creative Commons license family and is used only for evaluation. 
LLaDA-8B-Instruct~\citep{zhu2025lladaInstruct} is released under the MIT License. 
We cite all original dataset and model sources and follow their usage protocols.

\end{document}